\definecolor{mycolor8}{rgb}{0, 0, 1}
\definecolor{mycolor7}{rgb}{0.15, 0.15, 0.9}
\definecolor{mycolor6}{rgb}{0.3, 0.3, 0.75}
\definecolor{mycolor5}{rgb}{0.45, 0.45, 0.6}
\definecolor{mycolor4}{rgb}{0.6, 0.6, 0.45}
\definecolor{mycolor3}{rgb}{0.75, 0.75, 0.3}
\definecolor{mycolor2}{rgb}{0.9, 0.9, 0.15}
\definecolor{mycolor1}{rgb}{1, 1, 0}
\newcommand{\R}{\mathbb{R}}
\newcommand{\N}{\mathbb{N}}
\newcommand{\E}{\mathbb{E}}
\newcommand{\Ell}{\mathcal{L}}
\newcommand{\Id}{\mathcal{I}}
\newcommand{\dual}[1]{\widehat{\ #1 \ }}
\newcommand{\limiting}[1]{\overset{\scriptscriptstyle\infty}{#1}}
\DeclareMathOperator{\sgn}{sgn}
\begin{document}

\title{MLPs at the EOC: Spectrum of the NTK}

\author{
  \name D\'avid Terj\'ek\thanks{Corresponding author.}
  \email dterjek@renyi.hu \\
  \addr Alfr\'ed R\'enyi Institute of Mathematics \\ Budapest, Hungary
  \AND
  \name Diego Gonz\'alez-S\'anchez
  \email diegogs@renyi.hu \\
  \addr Alfr\'ed R\'enyi Institute of Mathematics \\ Budapest, Hungary
}

\editor{}

\maketitle

\begin{abstract}%   <- trailing '%' for backward compatibility of .sty file
We study the properties of the Neural Tangent Kernel (NTK) $\limiting{K} : \R^{m_0} \times \R^{m_0} \to \R^{m_l \times m_l}$ corresponding to infinitely wide $l$-layer Multilayer Perceptrons (MLPs) taking inputs from $\R^{m_0}$ to outputs in $\R^{m_l}$ equipped with activation functions $\phi(s) = a s + b \vert s \vert$ for some $a,b \in \R$ and initialized at the Edge Of Chaos (EOC). We find that the entries $\limiting{K}(x_1,x_2)$ can be approximated by the inverses of the cosine distances of the activations corresponding to $x_1$ and $x_2$ increasingly better as the depth $l$ increases. By quantifying these inverse cosine distances and the spectrum of the matrix containing them, we obtain tight spectral bounds for the NTK matrix $\limiting{K} = [\frac{1}{n} \limiting{K}(x_{i_1},x_{i_2}) : i_1, i_2 \in [1:n]]$ over a dataset $\{x_1,\cdots,x_n\} \subset \R^{m_0}$, transferred from the inverse cosine distance matrix via our approximation result. Our results show that $\Delta_\phi = \frac{b^2}{a^2+b^2}$ determines the rate at which the condition number of the NTK matrix converges to its limit as depth increases, implying in particular that the absolute value ($\Delta_\phi=1$) is better than the ReLU ($\Delta_\phi=\frac{1}{2}$) in this regard.
\end{abstract}

%\begin{keywords}
%Neural Tangent Kernel, Correlation propagation, Spectrum, Condition number.
%\end{keywords}

\section{Introduction} \label{introduction}
Formally introduced in the celebrated work of \citet{Jacotetal2018}, the NTK has been widely employed to analyze the problem of overparameterized learning. Given a neural network $N : \R^{m_0} \times \Theta \to \R^{m_l}$ that maps an input $x \in \R^{m_0}$ and a parameter $\theta \in \Theta$ to an output $N(x,\theta) \in \R^{m_l}$, the corresponding NTK at some parameter $\theta$ is the matrix-valued kernel $K_\theta : \R^{m_0} \times \R^{m_0} \to \R^{m_l \times m_l}$ defined as $K_\theta(x_1,x_2) = \partial_\theta N(x_1,\theta) {\partial_\theta N(x_2,\theta)}^*$ (the product of the Jacobian of $N(x_1,\cdot) : \Theta \to \R^{m_l}$ and the adjoint of the Jacobian of $N(x_2,\cdot) : \Theta \to \R^{m_l}$) for all input pairs $x_1,x_2 \in \R^{m_0}$. \citet{Jacotetal2018} showed that for MLPs, as width grows to infinity, $K_\theta$ at initialization (with $\theta$ drawn from the initial parameter distribution) converges in probability to a limiting NTK $K_\infty$.

Parallel to these developments, the study of infinitely deep neural networks led \citet{Pooleetal2016} to the discovery of the so-called Edge of Chaos (EOC). \citet{Schoenholzetal2017} showed that the EOC is the regime where infinitely deep MLPs avoid both exploding and vanishing gradients. In this regime, \citet{Hayouetal2019} described the asymptotic behavior of the cosines (correlations) of the activations in the infinitely wide limit, \citet{Xiaoetal2020} characterized the spectrum of $K_\infty$ by sending first the width and then depth to infinity, \citet{Hayouetal2022} quantified the entries of $K_\infty$ and \citet{Seleznovaetal22} studied the entries of both $K_\theta$ and $K_\infty$ when width and depth grow with a constant ratio.

The motivation for our work was to provide non-asymptotic quantitative bounds analogous to the results of \citet{Hayouetal2019} and \citet{Xiaoetal2020} for MLPs with $(a,b)$-ReLU activations $\phi(s) = a s + b \vert s \vert$, to be applied in \citet{mlpsateoc2} in order to prove that the NTK concentrates around its limit. To this end, we first quantify the propagation of the inverse cosine distances of the infinitely wide limits of the hidden activations in Proposition~\ref{prop:inverse_cosine_distance_propagation}. Then we prove that the entries of the limiting NTK are approximated better and better by the inverse cosine distances as depth increases in Proposition~\ref{prop:inverse_cosine_distances_approximate_ntk}. Using the convexity of the function that propagates the inverse cosine distances across the layers, we then obtain tight spectral bounds for the inverse cosine distance matrices in Proposition~\ref{prop:inverse_cosine_distance_matrices}. Finally, based on the approximation result we transfer the spectral bounds to the limiting NTK matrix in Theorem~\ref{thm:ntk_spectrum}. Our results quantify the effect of the parameters $(a,b)$, showing that it is beneficial to the spectral properties of the limiting NTK matrix if $\Delta_\phi = \frac{b^2}{a^2+b^2} \in [0,1]$ (the squared cosine of the angle next to the second leg in the right triangle with legs of length $a$ and $b$) is close to $1$. In particular, we have $\Delta_\phi=0$ for the identity, $\Delta_\phi=\frac{1}{2}$ for the ReLU and $\Delta_\phi=1$ for the absolute value, with the latter leading to the best possible conditioning for a given depth $l$ (agreeing with the findings of \citet{Yangetal2024} that the absolute value is optimal among homogeneous activations).

The organization of the rest of the paper is as follows. We conclude \S~\ref{introduction} by listing our contributions in \S~\ref{contributions} and introduce some notation in \S~\ref{preliminaries}. We state and prove our results concerning the limiting NTK in \S~\ref{limiting_ntk}. We conclude by discussing the limitations of our work in \S~\ref{limitations} along with future directions.

\subsection{Contributions} \label{contributions}
We propose
\begin{itemize}
\item non-asymptotic, quantitative bounds for the cosines of infinitely wide activations,
\item an approximation of the NTK entries in terms of inverse cosine distances,
\item tight spectral bounds for the limiting NTK matrix and
\item a quantification of the effect that the choice of homogeneous activation function has on the spectrum of the NTK.
\end{itemize}

\section{Preliminaries}\label{preliminaries}

Given $i, j \in \N$, we define the tuple $[i:j] = (i,i+1,\cdots,j-1,j)$ (which is the empty tuple $()$ if $i > j$). For any $m, n \in \N$, we denote by $m\N+n$ the set $\{mr+n : r \in \N\}$. We denote by $\Vert \cdot \Vert$ both the Euclidean norm on $\R^n$ and the operator norm on $\R^{n \times n}$. On the latter, we denote the Frobenius norm by $\Vert \cdot \Vert_F$ and the infinity norm by $\Vert \cdot \Vert_\infty$ (with the latter defined as $\Vert A \Vert_\infty = \max_{i_1 \in [1:n]}\{ \sum_{i_2 \in [1:n]} \vert A_{i_1,i_2} \vert \}$). The adjoint of a matrix $A \in \R^{n \times n}$ is the unique matrix $A^* \in \R^{n \times n}$ such that $\langle A x_1, x_2 \rangle = \langle x_1, A^* x_2 \rangle$ for all $x_1,x_2 \in \R^n$. We denote the set of $n \times n$ symmetric matrices by $\mathbb{S}^n = \{ A \in \R^{n \times n} : A = A^*\}$ and the set of $n \times n$ symmetric positive semidefinite matrices by $\mathbb{S}^n_+ = \{ A \in \mathbb{S}^n : \langle x, A x \rangle \geq 0 \text{ for } \forall x \in \R^n \}$. For $A \in \mathbb{S}^n$, we denote the $i$th eigenvalue by $\lambda_i(A)$ with the order being descending as $\lambda_1(A) \geq \cdots \geq \lambda_n(A)$, the smallest and largest eigenvalues by $\lambda_{\min}(A)=\lambda_n(A)$ and $\lambda_{\max}(A)=\lambda_1(A)=\Vert A \Vert$, respectively, as well as the condition number by $\kappa(A) = \frac{\lambda_{\max}(A)}{\lambda_{\min}(A)}$ provided that $\lambda_{\min}(A)\not=0$. Note that by the Gershgorin circle theorem we have $\Vert A \Vert \leq \Vert A \Vert_\infty$ for any $A \in \mathbb{S}^n$. We denote by $\Id_n \in \R^{n \times n}$ the identity matrix on $\R^n$. We denote the tensor product of a pair of vectors $x_1, x_2 \in \R^n$ by $x_1 \otimes x_2 = [ {x_1}_{i_1} {x_2}_{i_2} : i_1,i_2 \in [1:n]] \in \R^{n \times n}$ and the second tensor power of a vector $x \in \R^n$ by $x^{\otimes 2} = x \otimes x \in \mathbb{S}^n_+$. We denote the $n$-dimensional constant $1$ vector by $\mathbbm{1}_n = [ 1 : i \in [1:n]] \in \R^n$. For matrices $A_1 \in \R^{n \times n}$ and $A_2 \in \R^{m \times m}$, we denote their Kronecker product $A_1 \boxtimes A_2 = [ {A_1}_{i_1,i_2} A_2 : i_1,i_2 \in [1:n]] \in \R^{nm \times nm}$. Given $x \in \R^n$, we define the corresponding diagonal matrix $D_x \in \mathbb{S}^n$ as ${D_x}_{i_1,i_2} = x_i$ if $i_1 = i_2 = i$ and $0$ otherwise for all $i_1,i_2 \in [1:n]$.

The infinity and Lipschitz norms of real-valued functions are denoted by $\Vert \cdot \Vert_\infty$ and $\Vert \cdot \Vert_L$, respectively. Given a function $F:G \to H$, we say that it is differentiable if it is Fr\'echet differentiable, i.e., if there exists a bounded linear operator $\partial F(x) \in \Ell(G,H)$, which we refer to as the Jacobian of $F$ at $x$, satisfying $\lim_{y \to x }\frac{\Vert F(y) - F(x) - \partial F(x) (y - x) \Vert}{\Vert y - x \Vert}=0$. For a function $f$ with the same domain and codomain, we denote by $f^{\circ n}$ the nested composition of $f$ with itself $n \in \N$ times, with $f^{\circ 0}$ being the identity. We use the $O(\cdot)$ and $\Omega(\cdot)$ asymptotic notation in the sense that for functions $f,g : \N \to \R_+$, we say that $f(n) = O(g(n))$ (resp. $f(n) = \Omega(g(n))$) if there exists implicit constants $C \in \R_+$ and $n_0 \in \N$ such that $f(n) \leq C g(n)$ (resp. $f(n) \geq C g(n)$) for all $n \geq n_0$. The notation $f = \Theta(g)$ means that both $f=O(g)$ and $f=\Omega(g)$ hold. We denote the digamma function by $\digamma(x)=\frac{\Gamma'(x)}{\Gamma(x)}$.

Given $\mu \in \R^n$ and $\Sigma \in \mathbb{S}^n_+$, we denote by $\mathcal{N}\left(\mu,\Sigma\right)$ the multivariate Gaussian distribution with mean $\mu$ and covariance $\Sigma$. In particular, $\mathcal{N}(0,1)$ is the standard Gaussian distribution. By $X \sim \mathcal{N}(\mu, \Sigma)$ we mean that the random vector $X$ is distributed according to $\mathcal{N}(\mu, \Sigma)$. We use the same notation to denote the corresponding probability measure, i.e., $\E_{X \sim \mathcal{N}(\mu, \Sigma)} f(X) = \int f d\mathcal{N}(\mu, \Sigma) = \int f(x) d\mathcal{N}(x \vert \mu, \Sigma)$. We denote the norm of the Hilbert space $L^2(\mathcal{N}(0,1))$ by $\Vert f \Vert_{\mathcal{N}(0,1)} = \sqrt{\int f^2 d\mathcal{N}(0,1)}$ for $f \in L^2(\mathcal{N}(0,1))$.

\section{Infinite Width NTK at the EOC}\label{limiting_ntk}

We define $(a,b)$-ReLU activations as follows.

\begin{definition}[$(a,b)$-ReLU]~\\
Given $a,b \in \R$, define the $(a,b)$-ReLU $\phi : \R \to \R$ for all $s \in \R$ as $\phi(s) = as + b\vert s \vert$, so that $\phi'(s) = a + b \sgn(s)$ for all $s \in \R \setminus \{ 0 \}$ and $\Vert \phi \Vert_L = \vert a \vert + \vert b \vert$.
\end{definition}
Unless $b=0$, $\phi$ is not differentiable at $s=0$ in the usual sense, but any function $\psi : \R \to \R$ such that $\psi(s) = a + b \sgn(s)$ for all $s \in \R \setminus \{ 0 \}$ and $\psi(0) \in [a-b,a+b]$ can serve as its derivative in some suitable generalized sense. By abuse of notation, we define $\phi': \R \to \R$ as $\phi'(s) = a + b \sgn(s)$ for all $s \in \R$, so that $\phi'(0) = a$.

We do not treat the infinitely wide limits of the hidden layer activations and their derivatives directly, accessing them only via their inner products as defined below. The only hyperparameter of the MLP parameterization in \citet{mlpsateoc2} that affects the limiting NTK besides $(a,b)$ and the depth $l$ is the initial variance $\sigma^2$.

\begin{definition}[Limiting activation and derivative inner products]~\\
Define $\limiting{X}_k : \R^{m_0} \times \R^{m_0} \to \R$ for all $x_1,x_2 \in \R^{m_0}$ recursively as $\limiting{X}_1(x_1,x_2) = \langle x_1, x_2 \rangle$ and
\[
\limiting{X}_k(x_1,x_2) = \sigma^2 \int \phi(u_1) \phi(u_2) d\mathcal{N}\left( [u_1,u_2] \left\vert 0,\left[ \begin{smallmatrix} \limiting{X}_{k-1}(x_1,x_1) & \limiting{X}_{k-1}(x_1,x_2) \\ \limiting{X}_{k-1}(x_1,x_2) & \limiting{X}_{k-1}(x_2,x_2) \end{smallmatrix} \right] \right. \right) \text{ for } k \in [2:l]
\]
and define $\limiting{X'}_k : \R^{m_0} \times \R^{m_0} \to \R$ for all $x_1,x_2 \in \R^{m_0}$
\[
\limiting{X'}_k(x_1,x_2) = \sigma^2 \int \phi'(u_1) \phi'(u_2) d\mathcal{N}\left( [u_1,u_2] \left\vert 0,\left[ \begin{smallmatrix} \limiting{X}_{k-1}(x_1,x_1) & \limiting{X}_{k-1}(x_1,x_2) \\ \limiting{X}_{k-1}(x_1,x_2) & \limiting{X}_{k-1}(x_2,x_2) \end{smallmatrix} \right] \right. \right) \text{ for } k \in [2:l].
\]
\end{definition}

With these in hand, we can define the limiting NTK.

\begin{definition}[Limiting NTK]~\\
Define $\limiting{K} : \R^{m_0} \times \R^{m_0} \to \R^{m_l \times m_l}$ for all $x_1,x_2 \in \R^{m_0}$ as
\[
\limiting{K}(x_1,x_2)
= \left( \sum_{k=1}^l \limiting{X}_k(x_1,x_2) \prod_{k'=k+1}^l \limiting{X'}_{k'}(x_1,x_2) \right) \Id_{m_l}.
\]
\end{definition}

For $(a,b)$-ReLUs, the EOC initialization is provided by \citet[Lemma~3]{Hayouetal2019} (referred to as the weak EOC in \citet{Hayouetal2019}).

\begin{definition}[EOC initialization for $(a,b)$-ReLUs]\label{def:eoc_parameterization}~\\
The MLP parameterization of \citet{mlpsateoc2} is initialized at the EOC by setting
\[
\sigma = (a^2 + b^2)^{-\frac{1}{2}}.
\]
\end{definition}

The following definition is taken from \citet{Danielyetal2016}.

\begin{definition}[Dual function]~\\
Given $f \in L^2(\mathcal{N}(0,1))$, define $\dual{f} : [-1,1] \to \R$ as
\[
\dual{f}(\rho) 
= \int f(u_1) f(u_2) d\mathcal{N}\left( [u_1,u_2] \left\vert 0,\left[ \begin{smallmatrix} 1 & \rho \\ \rho & 1 \end{smallmatrix} \right] \right. \right)\text{ for all } \rho \in [-1,1].
\]
\end{definition}

\begin{remark}[$\vert \cdot \vert$ and $\sgn$]\label{rem:abs_sgn_properties}~\\
Although the absolute value is not differentiable at $0$, we do have that $\vert \cdot \vert' \in L^2(\mathcal{N}(0,1))$ is the equivalence class of the sign function $\sgn$. Therefore, by abuse of notation we refer to $\sgn$ as the derivative of $\vert \cdot \vert$. Additionally, we have
\[
\Vert \vert \cdot \vert \Vert_{\mathcal{N}(0,1)} = \Vert \sgn \Vert_{\mathcal{N}(0,1)} = \Vert \vert \cdot \vert \Vert_L = \Vert \sgn \Vert_\infty = 1, 
\]
\[
\dual{\vert\cdot\vert}(\rho) = \frac{2}{\pi}(\rho\arcsin(\rho) + \sin(\arccos(\rho)),
\]
\[
\dual{\vert\cdot\vert}'(\rho) = \dual{\sgn}(\rho) = \frac{2}{\pi}\arcsin(\rho)
\]
and
\[
\dual{\vert\cdot\vert}''(\rho) = \dual{\sgn}'(\rho)=\frac{2}{\pi}(1-\rho^2)^{-\frac{1}{2}}.
\]
\end{remark}

Denote the cosines of the limiting activations for all $x_1,x_2 \in \R^{m_0}$ and $k \in [1:l]$ as
\[
\limiting{\rho}_k(x_1,x_2) = \frac{\limiting{X}_k(x_1,x_2)}{\sqrt{\limiting{X}_k(x_1,x_1)} \sqrt{\limiting{X}_k(x_2,x_2)}}.
\]

\begin{proposition}[Limiting inner products as dual functions]\label{prop:limiting_inner_products}~\\
For all $x_1, x_2 \in \R^{m_0}$ and $k \in [2:l]$, we have
\[
\limiting{X}_k(x_1,x_2)
= \sqrt{\limiting{X}_{k-1}(x_1,x_1)} \sqrt{\limiting{X}_{k-1}(x_2,x_2)} \sigma^2 \dual{\phi}\left( \limiting{\rho}_{k-1}(x_1,x_2) \right)
\]
and
\[
\limiting{X'}_k(x_1,x_2)
= \sigma^2 \dual{\phi'}\left( \limiting{\rho}_{k-1}(x_1,x_2) \right).
\]
\end{proposition}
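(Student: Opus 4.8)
The plan is to reduce each of the two integrals to the standardized bivariate Gaussian appearing in the definition of the dual function by rescaling the coordinates to unit variance, and then to exploit the homogeneity of $\phi$ and $\phi'$. Abbreviate $\sigma_1 = \sqrt{\limiting{X}_{k-1}(x_1,x_1)}$ and $\sigma_2 = \sqrt{\limiting{X}_{k-1}(x_2,x_2)}$, so that the correlation of the Gaussian in the definition of $\limiting{X}_k$ is exactly $\rho = \limiting{\rho}_{k-1}(x_1,x_2) = \limiting{X}_{k-1}(x_1,x_2)/(\sigma_1 \sigma_2)$.

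First I would change variables via $u_i = \sigma_i v_i$. If $[u_1,u_2]$ is distributed according to the Gaussian with the covariance matrix from the definition, then $[v_1,v_2] \sim \mathcal{N}\left(0, \left[\begin{smallmatrix} 1 & \rho \\ \rho & 1 \end{smallmatrix}\right]\right)$, which is precisely the law governing the dual function. This single change of variables underlies both identities; the two differing forms then arise solely from the two differing homogeneity degrees of $\phi$ and $\phi'$.

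For the first identity I would use that the $(a,b)$-ReLU $\phi$ is positively homogeneous of degree one: since $\sigma_i > 0$ we have $\phi(\sigma_i v_i) = \sigma_i \phi(v_i)$. Hence $\phi(u_1)\phi(u_2) = \sigma_1 \sigma_2 \, \phi(v_1)\phi(v_2)$, the integral factors as $\sigma_1 \sigma_2 \, \dual{\phi}(\rho)$, and multiplying by the $\sigma^2$ prefactor recovers $\limiting{X}_k(x_1,x_2) = \sigma_1 \sigma_2 \, \sigma^2 \dual{\phi}(\rho)$, which is the claim. For the second identity the key distinction is that $\phi'(s) = a + b\sgn(s)$ is homogeneous of degree zero: since $\sgn(\sigma_i v_i) = \sgn(v_i)$ for $\sigma_i > 0$, we have $\phi'(\sigma_i v_i) = \phi'(v_i)$, so the same substitution leaves the integrand unchanged, the integral collapses directly to $\dual{\phi'}(\rho)$, and the prefactor yields $\limiting{X'}_k(x_1,x_2) = \sigma^2 \dual{\phi'}(\rho)$.

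The argument is essentially routine, so I do not expect a substantial obstacle; the only points needing a line of care are the degenerate cases $\sigma_1 = 0$ or $\sigma_2 = 0$, where both the rescaling and the correlation $\rho$ are undefined, and the convention $\phi'(0) = a$, which is irrelevant since $\{0\}$ is a Gaussian null set and hence does not affect the integral. I would dispatch the nondegeneracy by noting that applying the first identity with $x_1 = x_2 = x_i$ together with $\dual{\phi}(1) = \Vert \phi \Vert_{\mathcal{N}(0,1)}^2 = a^2 + b^2$ and the EOC choice $\sigma^2 = (a^2+b^2)^{-1}$ gives $\limiting{X}_k(x_i,x_i) = \limiting{X}_{k-1}(x_i,x_i)$, so that $\sigma_i = \Vert x_i \Vert > 0$ for nonzero inputs.
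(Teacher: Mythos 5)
Your argument is correct and is exactly the paper's (one-line) proof spelled out in detail: the paper simply says both claims "follow from the definitions using the homogeneity of $\phi$ and the fact that $\phi'(s)=\phi'(ts)$ for all $t>0$," which is precisely your rescaling $u_i=\sigma_i v_i$ combined with the degree-one homogeneity of $\phi$ and degree-zero homogeneity of $\phi'$. Your added remarks on the null set $\{0\}$ and on nondegeneracy are fine (and more care than the paper takes), though note the proposition itself does not assume the EOC choice of $\sigma$, so the positivity of $\limiting{X}_{k-1}(x_i,x_i)$ for $x_i\neq 0$ is better justified directly from the recursion than via Definition~\ref{def:eoc_parameterization}.
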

\begin{proof}
Both claims follow from the definitions using the homogeneity of $\phi$ and the fact that $\phi'(s) = \phi'(ts)$ for all $t > 0$.
\end{proof}

\begin{proposition}[Limiting inner products at the EOC]\label{prop:eoc_properties}~\\
Setting \eqref{def:eoc_parameterization}, for all $x \in \R^{m_0}$ we have $\limiting{X}_k(x,x) = \Vert x \Vert^2$ for all $k \in [1:l]$ and $\limiting{X'}_k(x,x) = 1$ for all $k \in [2:l]$.
\end{proposition}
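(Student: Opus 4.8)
The plan is to reduce both statements to the evaluation of the dual functions $\dual{\phi}$ and $\dual{\phi'}$ at correlation $\rho = 1$ via Proposition~\ref{prop:limiting_inner_products}. Throughout I take $x \neq 0$ so that the on-diagonal cosine is well-defined; the case $x = 0$ makes every $\limiting{X}_k(x,x)$ vanish, consistent with $\Vert x \Vert^2 = 0$.

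For the activation inner products I argue by induction on $k$, the base case being immediate from the definition as $\limiting{X}_1(x,x) = \langle x, x \rangle = \Vert x \Vert^2$. The crucial observation for the inductive step is that the on-diagonal cosine is identically one, $\limiting{\rho}_{k-1}(x,x) = \limiting{X}_{k-1}(x,x)/(\sqrt{\limiting{X}_{k-1}(x,x)}\sqrt{\limiting{X}_{k-1}(x,x)}) = 1$, so substituting $x_1 = x_2 = x$ into the first identity of Proposition~\ref{prop:limiting_inner_products} collapses it to the scalar recursion $\limiting{X}_k(x,x) = \limiting{X}_{k-1}(x,x)\,\sigma^2\dual{\phi}(1)$. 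Since the Gaussian with correlation $1$ concentrates on the diagonal $\{u_1 = u_2\}$, we have $\dual{\phi}(1) = \int \phi(u)^2\,d\mathcal{N}(u\vert 0,1) = \Vert \phi \Vert_{\mathcal{N}(0,1)}^2$, and expanding $\phi(u)^2 = a^2 u^2 + 2ab\,u\vert u\vert + b^2 u^2$ while using that $u \mapsto u\vert u\vert$ is odd and $\int u^2\,d\mathcal{N}(u\vert 0,1)=1$ gives $\dual{\phi}(1) = a^2 + b^2$. The EOC choice $\sigma^2 = (a^2+b^2)^{-1}$ of Definition~\ref{def:eoc_parameterization} thus makes $\sigma^2\dual{\phi}(1)=1$, so the recursion reads $\limiting{X}_k(x,x) = \limiting{X}_{k-1}(x,x)$ and induction yields $\limiting{X}_k(x,x) = \Vert x \Vert^2$ for all $k \in [1:l]$.

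For the derivative inner products, the second identity of Proposition~\ref{prop:limiting_inner_products} gives $\limiting{X'}_k(x,x) = \sigma^2\dual{\phi'}(\limiting{\rho}_{k-1}(x,x)) = \sigma^2\dual{\phi'}(1)$, again using $\limiting{\rho}_{k-1}(x,x)=1$. The same correlation-one argument gives $\dual{\phi'}(1) = \Vert \phi'\Vert_{\mathcal{N}(0,1)}^2$, and expanding $\phi'(u)^2 = a^2 + 2ab\,\sgn(u) + b^2\sgn(u)^2$ with $\int \sgn\,d\mathcal{N}(0,1)=0$ and $\sgn^2 = 1$ almost everywhere yields $\dual{\phi'}(1) = a^2 + b^2$ as well; hence $\sigma^2\dual{\phi'}(1) = 1$ for all $k \in [2:l]$, as claimed.

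I do not anticipate a genuine obstacle: the proposition is essentially the statement that the EOC normalization $\sigma^2(a^2+b^2)=1$ is exactly what turns $\Vert x \Vert^2$ and $1$ into fixed points of the layerwise variance recursions for $\phi$ and $\phi'$. The only point needing mild care is the rank-deficiency of the correlation-one covariance, handled by noting that $\mathcal{N}\left(0,\left[\begin{smallmatrix} 1 & 1 \\ 1 & 1 \end{smallmatrix}\right]\right)$ is supported on $\{u_1 = u_2\}$, which reduces both dual evaluations to $L^2(\mathcal{N}(0,1))$ norms.
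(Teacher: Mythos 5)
Your proof is correct and follows essentially the same route as the paper's: both reduce to Proposition~\ref{prop:limiting_inner_products} evaluated at on-diagonal cosine $1$, compute $\dual{\phi}(1)=\dual{\phi'}(1)=a^2+b^2$ via the odd/even decomposition of $\phi$ and $\phi'$, and observe that the EOC choice $\sigma^2=(a^2+b^2)^{-1}$ makes the diagonal recursion the identity, closing with induction. Your added remarks on the $x=0$ case and the rank-one covariance supported on $\{u_1=u_2\}$ are harmless extra care that the paper leaves implicit.
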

\begin{proof}
Note that $\sigma = \Vert \phi \Vert_{\mathcal{N}(0,1)}^{-1} = \Vert \phi' \Vert_{\mathcal{N}(0,1)}^{-1}$ since $\vert\cdot\vert$ is an even function with $\Vert \vert\cdot\vert \Vert_{\mathcal{N}(0,1)}^2=1$ and $\sgn$ is odd with $\Vert \sgn \Vert_{\mathcal{N}(0,1)}^2=1$, while we also have $\dual{\phi}(1) = \dual{\phi'}(1) = \Vert \phi \Vert_{\mathcal{N}(0,1)} = \Vert \phi' \Vert_{\mathcal{N}(0,1)}$. Assume that $\limiting{X}_{k'}(x,x) = \Vert x \Vert^2$ for all $k' \in [1:k-1]$, which holds by definition for $k=2$. Therefore via Proposition~\ref{prop:limiting_inner_products} we have $\limiting{X}_k(x,x) = \limiting{X}_{k-1}(x,x) \sigma^2 \Vert \phi \Vert_{\mathcal{N}(0,1)}^2 = \limiting{X}_{k-1}(x,x)$, giving the first claim by induction. By Proposition~\ref{prop:limiting_inner_products} we have $\limiting{X'}_k(x,x) = \sigma^2 \Vert \phi' \Vert_{\mathcal{N}(0,1)}^2 = 1$, giving the second claim.
\end{proof}

Note that the cosine of two random variables is their correlation. In fact, what we refer to as cosines are usually called correlations in the EOC literature, e.g. in \citet{Hayouetal2019}. Nevertheless, we stick to the former as we are going to analyze them via the associated cosine distances, aligning more with the geometric viewpoint than the probabilistic one.

\begin{proposition}[Cosine map]\label{prop:cosine_map}~\\
Denote $\Delta_\phi = \frac{b^2}{a^2+b^2}$ and define the cosine map $\varrho : [-1,1] \to \R$ as $\varrho(\rho) = \sigma^2 \dual{\phi}(\rho)$ for all $\rho \in [-1,1]$. Given $x_1,x_2 \in \R^{m_0}$ and setting \eqref{def:eoc_parameterization}, we have $\limiting{\rho}_k(x_1,x_2) = \varrho(\limiting{\rho}_{k-1}(x_1,x_2))$ and $\limiting{X'}_k(x_1,x_2) = \varrho'(\limiting{\rho}_{k-1}(x_1,x_2))$ for all $k \in [2:l]$. Moreover, we have
\[
\varrho(\rho) 
= \rho + \Delta_\phi \frac{2}{\pi}\left( \sqrt{1-\rho^2} - \rho\arccos(\rho) \right) \in [-1,1]
\]
and
\[
\varrho'(\rho) 
= 1 - \Delta_\phi \frac{2}{\pi}\arccos(\rho) \in [-1,1] 
\] 
for all $\rho \in [-1,1]$. 
\end{proposition}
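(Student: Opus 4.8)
The plan is to first establish the closed form of $\varrho$, from which every other assertion follows by differentiation and elementary estimates. Since the dual function is quadratic in its argument, I would expand $\phi = a\,\mathrm{id} + b\vert\cdot\vert$ inside the defining integral to write $\dual{\phi}(\rho) = a^2 \dual{\mathrm{id}}(\rho) + 2ab\,\E[u_1\vert u_2\vert] + b^2\dual{\vert\cdot\vert}(\rho)$, where $(u_1,u_2)$ is bivariate standard normal with correlation $\rho$. The first term is $a^2\rho$; the cross term vanishes because, writing $u_1 = \rho u_2 + \sqrt{1-\rho^2}\,z$ with $z \sim \mathcal{N}(0,1)$ independent of $u_2$, one gets $\E[u_1\vert u_2\vert] = \rho\,\E[u_2\vert u_2\vert] = 0$ by oddness of $u\mapsto u\vert u\vert$; and the last term is supplied by Remark~\ref{rem:abs_sgn_properties} as $\dual{\vert\cdot\vert}(\rho) = \frac{2}{\pi}(\rho\arcsin(\rho)+\sqrt{1-\rho^2})$. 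Multiplying by $\sigma^2 = (a^2+b^2)^{-1}$ and collecting the coefficients as $1-\Delta_\phi = \frac{a^2}{a^2+b^2}$ and $\Delta_\phi = \frac{b^2}{a^2+b^2}$ gives $\varrho(\rho) = (1-\Delta_\phi)\rho + \Delta_\phi\frac{2}{\pi}(\rho\arcsin(\rho)+\sqrt{1-\rho^2})$, and the single identity $\arcsin(\rho)+\arccos(\rho)=\frac{\pi}{2}$ converts this into the stated form $\rho + \Delta_\phi\frac{2}{\pi}(\sqrt{1-\rho^2}-\rho\arccos(\rho))$. Differentiating once then yields $\varrho'(\rho) = 1 - \Delta_\phi\frac{2}{\pi}\arccos(\rho)$, the two $\frac{\rho}{\sqrt{1-\rho^2}}$ contributions cancelling.

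Next I would verify the two recursion identities. For the cosines, Proposition~\ref{prop:limiting_inner_products} combined with Proposition~\ref{prop:eoc_properties} does the work: at the EOC the diagonal entries $\limiting{X}_k(x_i,x_i)$ equal $\Vert x_i\Vert^2$, so the normalizing denominators in $\limiting{\rho}_k$ cancel the prefactor $\sqrt{\limiting{X}_{k-1}(x_1,x_1)}\sqrt{\limiting{X}_{k-1}(x_2,x_2)}$, leaving exactly $\sigma^2\dual{\phi}(\limiting{\rho}_{k-1}) = \varrho(\limiting{\rho}_{k-1})$. For the derivative inner product, Proposition~\ref{prop:limiting_inner_products} gives $\limiting{X'}_k = \sigma^2\dual{\phi'}(\limiting{\rho}_{k-1})$, and expanding $\phi' = a + b\,\sgn$ exactly as above (the cross terms again vanishing since $\E[\sgn(u_2)]=0$) produces $\dual{\phi'}(\rho) = a^2 + b^2\dual{\sgn}(\rho) = a^2 + b^2\frac{2}{\pi}\arcsin(\rho)$; multiplying by $\sigma^2$ and once more using $\arcsin+\arccos=\frac{\pi}{2}$ matches $\varrho'(\rho)$, proving $\limiting{X'}_k = \varrho'(\limiting{\rho}_{k-1})$. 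Equivalently, one may shortcut this last check by the identity $\dual{\phi}' = \dual{\phi'}$ coming from Gaussian integration by parts.

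The remaining task, and the only genuinely non-bookkeeping step, is the containment $\varrho(\rho),\varrho'(\rho)\in[-1,1]$. For the derivative this is immediate: $\arccos(\rho)\in[0,\pi]$ and $\Delta_\phi\in[0,1]$ force $\Delta_\phi\frac{2}{\pi}\arccos(\rho)\in[0,2]$, hence $\varrho'(\rho)\in[-1,1]$. For $\varrho$ itself I would exploit convexity: differentiating again gives $\varrho''(\rho) = \Delta_\phi\frac{2}{\pi}(1-\rho^2)^{-1/2}\geq 0$, so $\varrho$ is convex on $[-1,1]$ and therefore attains its maximum at an endpoint, where $\varrho(1)=1$ and $\varrho(-1)=2\Delta_\phi-1\leq 1$ give $\varrho(\rho)\leq 1$; for the lower bound, convexity places $\varrho$ above its tangent line at $\rho=1$, which is exactly $\rho\mapsto\rho$ since $\varrho(1)=\varrho'(1)=1$, so $\varrho(\rho)\geq\rho\geq-1$. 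The point requiring care is precisely this convexity argument, since $\varrho$ need not be monotone — its derivative can be negative near $\rho=-1$ when $\Delta_\phi>\frac{1}{2}$ — so one cannot simply track endpoint values along a monotone map and must instead invoke the extremal principle for convex functions together with the supporting line at $\rho=1$.
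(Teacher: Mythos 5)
Your argument is correct and takes essentially the same route as the paper's proof: both expand $\dual{\phi}$ bilinearly so that the odd--even cross terms vanish, substitute the formulas for $\dual{\vert\cdot\vert}$ and $\dual{\sgn}$ from Remark~\ref{rem:abs_sgn_properties}, and read off the recursions from Propositions~\ref{prop:limiting_inner_products} and~\ref{prop:eoc_properties}. Your convexity argument for the containment $\varrho(\rho)\in[-1,1]$ is a welcome addition, since the paper asserts it without comment; note it also follows in one line from Cauchy--Schwarz, $\vert\dual{\phi}(\rho)\vert\leq\Vert\phi\Vert_{\mathcal{N}(0,1)}^{2}=\sigma^{-2}$.
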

\begin{proof}
Proposition~\ref{prop:limiting_inner_products} implies the expressions for $\limiting{\rho}_k(x_1,x_2)$ and $\limiting{X'}_k(x_1,x_2)$. As $\dual{a \cdot}(\rho) = a^2 \rho$ and $\dual{a}(\rho) = a^2$, Proposition~\ref{prop:limiting_inner_products} and Remark~\ref{rem:abs_sgn_properties} together yield the formulas for $\varrho(\rho)$ and $\varrho'(\rho)$ (using that $\Vert \vert\cdot\vert \Vert_{\mathcal{N}(0,1)} = \Vert \sgn \Vert_{\mathcal{N}(0,1)} = 1$ and the fact that $\phi$ and $\phi'$ are both linear combinations of an odd and an even function).
\end{proof}

The limiting NTK takes the following a simple form for $(a,b)$-ReLUs at the EOC.

\begin{proposition}[$\limiting{K}$ at the EOC]\label{prop:limiting_ntk_at_eoc}~\\
Given $x_1,x_2 \in \R^{m_0}$ and setting \eqref{def:eoc_parameterization}, we have
\[
\limiting{K}(x_1,x_2) = \Vert x_1 \Vert \Vert x_2 \Vert \left( \sum_{k=1}^l \varrho^{\circ (k-1)}\left( \limiting{\rho}_1(x_1,x_2) \right) \prod_{k'=k}^{l-1} \varrho'\left( \varrho^{\circ (k'-1)}\left( \limiting{\rho}_1(x_1,x_2) \right) \right) \right) \Id_{m_l}.
\]
\end{proposition}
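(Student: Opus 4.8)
The plan is to take the definition of $\limiting{K}$ and rewrite each of its two ingredients---the factors $\limiting{X}_k(x_1,x_2)$ and the derivative factors $\limiting{X'}_{k'}(x_1,x_2)$---using the EOC identities already established, after which the only remaining work is to align the summation and product indices with those in the target formula.

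First I would use Proposition~\ref{prop:eoc_properties}, which gives $\limiting{X}_k(x_i,x_i) = \Vert x_i \Vert^2$ for both $i \in \{1,2\}$ and every $k \in [1:l]$. By the definition of the cosine $\limiting{\rho}_k$, this lets me write $\limiting{X}_k(x_1,x_2) = \sqrt{\limiting{X}_k(x_1,x_1)}\sqrt{\limiting{X}_k(x_2,x_2)}\,\limiting{\rho}_k(x_1,x_2) = \Vert x_1 \Vert \Vert x_2 \Vert\, \limiting{\rho}_k(x_1,x_2)$, so that the common scalar factor $\Vert x_1 \Vert \Vert x_2 \Vert$ can be pulled out of the sum.

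Next I would invoke Proposition~\ref{prop:cosine_map}. Its recursion $\limiting{\rho}_k = \varrho(\limiting{\rho}_{k-1})$ unrolls by an immediate induction into $\limiting{\rho}_k(x_1,x_2) = \varrho^{\circ(k-1)}(\limiting{\rho}_1(x_1,x_2))$ for $k \in [1:l]$, which turns each $\limiting{X}_k$ factor into $\Vert x_1 \Vert \Vert x_2 \Vert\, \varrho^{\circ(k-1)}(\limiting{\rho}_1(x_1,x_2))$. The same proposition gives $\limiting{X'}_{k'}(x_1,x_2) = \varrho'(\limiting{\rho}_{k'-1}(x_1,x_2)) = \varrho'(\varrho^{\circ(k'-2)}(\limiting{\rho}_1(x_1,x_2)))$ for $k' \in [2:l]$, so every derivative factor in the inner product is an iterate of $\varrho'$ evaluated along the orbit of $\limiting{\rho}_1$.

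The only remaining---and entirely clerical---step is the index reconciliation, which I expect to be the sole place where one might slip. In the definition the inner product runs over $k' \in [k+1:l]$ with argument $\varrho^{\circ(k'-2)}$; substituting $j = k'-1$ shifts this to $j \in [k:l-1]$ with argument $\varrho^{\circ(j-1)}$, which is exactly the product $\prod_{k'=k}^{l-1}\varrho'(\varrho^{\circ(k'-1)}(\limiting{\rho}_1(x_1,x_2)))$ appearing in the claim (after renaming $j$ back to $k'$). Assembling the pulled-out norm factor, the rewritten summand, and the reindexed product yields the stated expression, while $\Id_{m_l}$ carries through unchanged. There is no genuine analytic difficulty here; the content lies entirely in the EOC identities supplied by Propositions~\ref{prop:eoc_properties} and \ref{prop:cosine_map}, and this proposition merely repackages them.
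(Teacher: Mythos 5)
Your proposal is correct and is essentially the paper's own argument (the paper compresses it to ``follows from the definitions and Proposition~\ref{prop:cosine_map}''), with the EOC normalization $\limiting{X}_k(x_i,x_i)=\Vert x_i\Vert^2$ from Proposition~\ref{prop:eoc_properties} supplying the pulled-out factor and the shift $j=k'-1$ handling the only delicate point, the reindexing of the derivative product. Nothing is missing.
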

\begin{proof}
The claim follows from the definitions and Proposition~\ref{prop:cosine_map}.
\end{proof}
Note that in the last term, we have the empty product $\prod_{k'=l}^{l-1} \varrho'(\varrho^{\circ (k'-1)}(\limiting{\rho}_1(x_1,x_2))) = 1$.

To a cosine $\rho \in [-1,1]$, we associate the squared cosine distance $z = \frac{1-\rho}{2} \in [0,1]$ and (if $z \neq 0$) the inverse cosine distance $w = z^{-\frac{1}{2}} = \left( \frac{1-\rho}{2} \right)^{-\frac{1}{2}} \in [1,\infty)$. The naming is justified by the fact that if $\rho = \langle y_1, y_2 \rangle$ for $y_1,y_2 \in \R^n$ with $\Vert y_1 \Vert = \Vert y_2 \Vert = 1$, then $z = \Vert \frac{1}{2} y_1 - \frac{1}{2} y_2 \Vert^2$.

\begin{proposition}[Squared cosine distance map]\label{prop:squared_cosine_distance_map}~\\
Define the squared cosine distance map $\zeta : [0,1] \to [0,1]$ as $\zeta(z) = \frac{1-\varrho(1 - 2z)}{2}$ for $z \in [0,1]$. Then we have
\[
\zeta(z) 
= z - \Delta_\phi \frac{1}{\pi} \left( \sqrt{1-(1-2z)^2} - (1-2z)\arccos(1-2z) \right) 
= z - \Delta_\phi \sum_{r \in 2\N+3} b_r z^{\frac{r}{2}},
\]

\[
\zeta'(z) 
= 1 - \Delta_\phi \frac{2}{\pi} \arccos(1-2z) 
= 1 - \Delta_\phi \sum_{r \in 2\N+3} \frac{1}{2} r b_r z^{\frac{r-2}{2}}
\]
and
\[
\zeta''(z)
= - \Delta_\phi \frac{2}{\pi} (1-z)^{-\frac{1}{2}} z^{-\frac{1}{2}}
= - \Delta_\phi \sum_{r \in 2\N+3} \frac{1}{4} r (r-2) b_r z^{\frac{r-4}{2}}
\]
with the coefficients
\[
b_r = \frac{2}{\pi} \frac{((\frac{r}{r-2})^2-1)((r-2)!!)^2}{r!} \text{ for } r \in 2\N+3
\]
satisfying $b_3 = \frac{8}{3\pi}$, $b_r \geq 0$ and $\sum_{r \in 2\N+3} b_r = 1$.
\end{proposition}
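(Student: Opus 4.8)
The plan is to establish the three closed-form expressions first and then derive the power series by working downward from the second derivative, where a binomial expansion is transparent.

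\textbf{Closed forms.} Writing $\rho = 1-2z$, the definition $\zeta(z) = \frac{1-\varrho(1-2z)}{2}$ combined with the formula for $\varrho$ from Proposition~\ref{prop:cosine_map} gives $\zeta(z) = z - \Delta_\phi\frac{1}{\pi}(\sqrt{1-(1-2z)^2} - (1-2z)\arccos(1-2z))$ once the $\frac{1}{2}$ and $\frac{2}{\pi}$ factors combine. Differentiating via the chain rule, the two sign changes (from $\frac{d}{dz}$ and from $1-2z$) cancel, so $\zeta'(z) = \varrho'(1-2z) = 1 - \Delta_\phi\frac{2}{\pi}\arccos(1-2z)$ directly from the expression for $\varrho'$. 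Differentiating once more, using $\frac{d}{d\rho}\arccos\rho = -(1-\rho^2)^{-1/2}$ together with the identity $1-(1-2z)^2 = 4z(1-z)$, yields $\zeta''(z) = -\Delta_\phi\frac{2}{\pi}(1-z)^{-1/2}z^{-1/2}$.

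\textbf{Extracting the coefficients.} Expanding $(1-z)^{-1/2} = \sum_{n\geq 0}\frac{(2n)!}{4^n(n!)^2}z^n$ and multiplying by $z^{-1/2}$ gives $\zeta''(z) = -\Delta_\phi\frac{2}{\pi}\sum_{n\geq 0}\frac{(2n)!}{4^n(n!)^2}z^{n-1/2}$. Reindexing by $r = 2n+3$ (so $r$ ranges over $2\N+3$ and $z^{n-1/2} = z^{(r-4)/2}$) and matching against $-\Delta_\phi\sum_{r}\frac{1}{4}r(r-2)b_r z^{(r-4)/2}$ forces $\frac{1}{4}(2n+3)(2n+1)b_r = \frac{2}{\pi}\frac{(2n)!}{4^n(n!)^2}$, i.e. $b_r = \frac{8}{\pi(2n+1)(2n+3)}\frac{(2n)!}{4^n(n!)^2}$. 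I would then check this agrees with the stated $b_r = \frac{2}{\pi}\frac{((r/(r-2))^2-1)((r-2)!!)^2}{r!}$ by substituting $r=2n+3$, computing $(r/(r-2))^2-1 = \frac{8(n+1)}{(2n+1)^2}$, and using $(2n+1)!! = \frac{(2n+1)!}{2^n n!}$ to reduce the double factorials to the central binomial coefficient.

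\textbf{Integrating up.} Since $\frac{(2n)!}{4^n(n!)^2}\sim(\pi n)^{-1/2}$ one has $b_r = \Theta(r^{-5/2})$, so the series $\sum_{r}b_r z^{r/2}$ converges on all of $[0,1]$ and the $\zeta''$-series may be integrated term by term on $(0,1)$. Integrating once, with the constant fixed by $\zeta'(0)=1$ (immediate from the closed form, as $\arccos 1 = 0$), produces the $\zeta'$-series; integrating again, with the constant fixed by $\zeta(0)=0$ (since $\varrho(1)=1$), produces the $\zeta$-series, the coefficient becoming exactly $b_r$ after the factors $\frac{2}{r-2}$ and $\frac{2}{r}$ from the two integrations cancel the $\frac{1}{4}r(r-2)$.

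\textbf{Properties of $b_r$ and the main obstacle.} The value $b_3 = \frac{8}{3\pi}$ follows by setting $n=0$, and $b_r \geq 0$ is clear since every factor in the extracted formula is positive. The identity $\sum_r b_r = 1$ is the one genuinely delicate point. Comparing the $\zeta$-series with its closed form shows $\sum_r b_r z^{r/2} = g(z) := \frac{1}{\pi}(\sqrt{1-(1-2z)^2} - (1-2z)\arccos(1-2z))$, and since the coefficients decay like $r^{-5/2}$ the series converges at $z=1$; Abel's theorem then gives $\sum_r b_r = \lim_{z\to 1^-}g(z) = g(1) = \frac{1}{\pi}(0+\pi) = 1$. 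I expect this boundary evaluation to be the main obstacle, because the underlying $\zeta''$-series diverges at $z=1$ and the summation identity can only be justified by invoking the post-integration decay of $b_r$ to pass to the endpoint.
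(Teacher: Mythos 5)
Your proposal is correct, and every computation I checked goes through: the closed forms follow from Proposition~\ref{prop:cosine_map} exactly as you say, the identity $1-(1-2z)^2=4z(1-z)$ gives the clean form of $\zeta''$, the coefficient extraction $b_{2n+3}=\frac{8}{\pi(2n+1)(2n+3)}\frac{(2n)!}{4^n(n!)^2}$ does reduce to the stated double-factorial formula, the two integrations cancel $\frac{1}{4}r(r-2)$ as claimed, and the Abel/monotone-convergence argument at $z=1$ (where $g(1)=\frac{1}{\pi}\arccos(-1)=1$) legitimately yields $\sum_r b_r=1$. However, your route differs from the paper's. The paper's proof is a two-line assertion of the expansions $\arccos(1-2z)=\sum_{r\in 2\N+1}2\frac{((r-2)!!)^2}{r!}z^{r/2}$ and $\sqrt{1-(1-2z)^2}=\sum_{r\in 2\N+1}-2\frac{(r-4)!!\,r!!}{r!}z^{r/2}$, which are substituted directly into the closed form of $\zeta$; the derivative series and the properties of $b_r$ then follow by inspection. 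You instead work top-down from $\zeta''$, where only the standard binomial series $(1-z)^{-1/2}=\sum_n\binom{2n}{n}4^{-n}z^n$ is needed, and recover $\zeta'$ and $\zeta$ by termwise integration with the constants pinned by $\zeta'(0)=1$ and $\zeta(0)=0$. Your version is more self-contained and actually justifies the half-integer-power expansions that the paper merely asserts (they can be obtained by your integration trick, e.g. via $\arccos(1-2z)=2\arcsin\sqrt{z}$), at the cost of needing the extra analytic care you correctly identify: termwise integration of the improperly integrable $n=0$ term (harmless since all terms are nonnegative) and the endpoint passage for $\sum_r b_r=1$, which the paper's approach would instead read off by evaluating its asserted series at $z=1$. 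No gaps.
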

\begin{proof}
The power series expansions 
\[
\arccos(1-2z)=\sum_{r \in 2\N+1} 2 \frac{((r-2)!!)^2}{r!} z^{\frac{r}{2}}
\]
and 
\[
\sqrt{1-(1-2z)^2}=\sum_{r \in 2\N+1} -2 \frac{(r-4)!! r!!}{r!} z^{\frac{r}{2}}
\]
give the claims.
\end{proof}
Note that we have $\varrho^{\circ k}(\rho) = 1 - 2 \zeta^{\circ k}(\frac{1-\rho}{2})$ and $\varrho'(\rho) = \zeta'(\frac{1-\rho}{2})$ for all $\rho \in [-1,1]$.

\begin{proposition}[Inverse cosine distance map]\label{prop:inverse_cosine_distance_map}~\\
Define the inverse cosine distance map $\omega : (1,\infty) \to (1,\infty)$ as $\omega(w) = \zeta(w^{-2})^{-\frac{1}{2}}$ for $w \in (1,\infty)$. Then $\omega$ is convex,
\[
\omega(w) = w + \Delta_\phi \frac{4}{3\pi} + \frac{3}{2} \left( \Delta_\phi \frac{4}{3\pi} \right)^2 w^{-1} + \Delta_\phi \varepsilon(w) w^{-2}
\]
with $\varepsilon : (1,\infty) \to (0,\infty)$ strictly decreasing and $\omega'(w) \in [0,1)$ for $w \geq \left( \frac{1-\cos(\min\{ \Delta_\phi^{-1} \frac{\pi}{2}, \pi\})}{2} \right)^{-\frac{1}{2}}$.
\end{proposition}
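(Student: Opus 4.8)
The plan is to reduce all three assertions to the single observation that, after factoring out $w$, the map $\omega$ has a power series in $w^{-1}$ with nonnegative coefficients. First I would record a closed form for the derivative. Writing $\alpha(z) = z^{-\frac12}$, so that $\alpha^{-1}(w) = w^{-2}$ and $\omega = \alpha \circ \zeta \circ \alpha^{-1}$ (indeed $\omega(w) = \zeta(w^{-2})^{-\frac12}$), differentiating the identity $\omega(\alpha(z)) = \alpha(\zeta(z))$ gives, with $z = w^{-2}$,
\[
\omega'(w) = \left( \frac{z}{\zeta(z)} \right)^{\frac{3}{2}} \zeta'(z), \qquad z = w^{-2}.
\]
I would use this form only for the sign of $\omega'$ on the stated range, and the series form below for convexity, the upper bound, and the expansion.

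For the series, I would feed in the expansion of $\zeta$ from Proposition~\ref{prop:squared_cosine_distance_map}. Since $\zeta(z) = z\bigl(1 - \Delta_\phi \sum_{r \in 2\N+3} b_r z^{\frac{r-2}{2}}\bigr)$, setting $z = w^{-2}$ gives $\omega(w) = w(1 - q(w))^{-\frac12}$ with $q(w) = \Delta_\phi \sum_{r \in 2\N+3} b_r w^{2-r}$, a series in $w^{-1}$ with nonnegative coefficients and odd exponents only. As $q$ is decreasing with $\sup_{w>1} q = \Delta_\phi \sum_r b_r = \Delta_\phi \leq 1$, we have $q(w) \in [0,1)$ for $w > 1$, so $(1-q)^{-\frac12} = \sum_{j \geq 0} \binom{2j}{j} 4^{-j} q^j$ converges with nonnegative coefficients; reordering (legitimate since every term is nonnegative and the total converges) yields $\omega(w) = w + \sum_{m \geq 1} a_m w^{1-m}$ with $a_m \geq 0$. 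Reading off the leading coefficients gives $a_1 = \tfrac12 \Delta_\phi b_3 = \Delta_\phi \tfrac{4}{3\pi}$ (only $q^1$ contributes to $w^{-1}$) and $a_2 = \tfrac38 \Delta_\phi^2 b_3^2 = \tfrac32 (\Delta_\phi \tfrac{4}{3\pi})^2$ ($q$ has no $w^{-2}$ term, so only $q^2$ contributes), which is exactly the stated expansion with $\Delta_\phi \varepsilon(w) = \sum_{m \geq 3} a_m w^{3-m}$.

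From here the assertions follow quickly. Convexity: each summand $a_m w^{1-m}$ is convex on $(1,\infty)$ ($m=0$ linear, $m=1$ constant, $m \geq 2$ a nonnegative multiple of the convex $w^{-(m-1)}$), so $\omega$, a pointwise-convergent sum of convex functions, is convex; equivalently $\omega''(w) = \sum_{m \geq 2} m(m-1) a_m w^{-1-m} \geq 0$. Upper bound: $\omega'(w) = 1 - \sum_{m \geq 2} (m-1) a_m w^{-m} < 1$ for all $w > 1$, since $a_2 > 0$. For $\varepsilon$ I would note that every $a_m$ with $m \geq 1$ carries a factor $\Delta_\phi^{\geq 1}$, so $\varepsilon = \Delta_\phi^{-1} \sum_{m \geq 3} a_m w^{3-m}$ is well defined, and check $a_3 = \tfrac12 \Delta_\phi b_5 + \tfrac{5}{16}\Delta_\phi^3 b_3^3 > 0$ and $a_4 = \tfrac34 \Delta_\phi^2 b_3 b_5 + \tfrac{35}{128}\Delta_\phi^4 b_3^4 > 0$, whence $\varepsilon > 0$ and $\varepsilon'(w) = -\Delta_\phi^{-1}\sum_{m \geq 4}(m-3)a_m w^{2-m} < 0$, i.e. $\varepsilon$ is strictly decreasing. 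Lower bound: on the stated range $z = w^{-2} \leq \tfrac{1-\cos\theta^*}{2}$ with $\theta^* = \min\{\Delta_\phi^{-1}\tfrac\pi2,\pi\}$, monotonicity of $\cos$ gives $\arccos(1-2z) \leq \theta^*$, hence $\zeta'(z) = 1 - \Delta_\phi\tfrac2\pi \arccos(1-2z) \geq 0$; since $0 < \zeta(z) \leq z$ there, the closed form yields $\omega'(w) \geq 0$.

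The main obstacle is really the bookkeeping rather than any single hard estimate: one must justify that reordering the double series into a single series in $w^{-1}$ is legitimate (this is where nonnegativity of the $b_r$ and the bound $q < 1$ are used) and then certify strict positivity of the specific coefficients $a_2$ (for $\omega' < 1$) and $a_3, a_4$ (for the sign and strict monotonicity of $\varepsilon$), alongside the translation of $\zeta' \geq 0$ into the threshold $w_0 = \bigl(\tfrac{1-\cos\theta^*}{2}\bigr)^{-\frac12}$. Throughout I assume $\Delta_\phi > 0$; for $\Delta_\phi = 0$ the map $\omega$ is the identity and the strict statements degenerate.
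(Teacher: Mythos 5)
Your proof is correct and follows essentially the same route as the paper's: both factor $\omega(w) = w(1-g(w))^{-\frac{1}{2}}$, expand via the binomial series in the nonnegative-coefficient series $g$ to read off the leading terms and deduce convexity and $\omega' < 1$, and both obtain $\omega' \geq 0$ from the chain-rule identity $\omega'(w) = (1-g(w))^{-\frac{3}{2}}\zeta'(w^{-2})$ together with the sign of $\zeta'$. The only differences are cosmetic (you argue convexity termwise rather than via monotonicity of $\omega'$, and you are more explicit about the reordering of the double series and the positivity of $a_3, a_4$).
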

\begin{proof}
Note that $\omega(w) = w (1 - g(w))^{-\frac{1}{2}}$ with $g(w) = 1 - \frac{\zeta(w^{-2})}{w^{-2}} = \Delta_\phi \sum_{r \in 2\N+3} b_r w^{-(r-2)}$. As $\sum_{k \in \N} \frac{(2k-1)!!}{2^k k!} s^k = (1-s)^{-\frac{1}{2}}$ for all $s \in [0,1]$, we have $\omega(w) = w \left( \frac{\zeta(w^{-2})}{w^{-2}} \right)^{-\frac{1}{2}} = w \sum_{k \in \N} \frac{(2k-1)!!}{2^k k!} g(w)^k$. Expanding this expression, we then have $\omega(w) = w + \frac{1}{2} \Delta_\phi b_3 + \frac{3}{2} ( \frac{1}{2} \Delta_\phi b_3 )^2 w^{-1} + \Delta_\phi \varepsilon(w) w^{-2}$ with
\begin{multline*}
\varepsilon(w) = \left( \frac{1}{2} + \frac{3}{4} \Delta_\phi b_3 w^{-1} + \frac{3}{8} \Delta_\phi \sum_{r \in 2\N+5} b_r w^{-(r-2)} \right) \sum_{r \in 2\N+5} b_r w^{-(r-5)} \\
+ \sum_{k \in \N+3} \frac{(2k-1)!!}{2^k k!} \left( \sum_{r \in 2\N+3} b_r w^{-(r-3)} \right)^k \Delta_\phi^{k-1} w^{-(k-3)}.
\end{multline*}
We can write $\varepsilon(w) = \sum_{k \in \N} c_k w^{-k}$ with $c_0 = \frac{1}{2} b_5 + \Delta_\phi^2 \frac{5}{16} b_3^3$ and $c_k \geq 0$ for $k \in \N+2$, so that $\varepsilon$ is strictly decreasing, $\varepsilon(w) \geq c_0$ and $\varepsilon'(w) = -\sum_{k \in \N+1} k c_k w^{-(k+1)}$. Taking the derivative, we get $\omega'(w) = 1 - \frac{3}{2} ( \frac{1}{2} \Delta_\phi b_3 )^2 w^{-2} + \Delta_\phi \varepsilon'(w) w^{-2} - 2 \Delta_\phi \varepsilon(w) w^{-3}$, so that $\omega' \leq 1$ and $\omega'$ is strictly increasing, hence $\omega$ is convex. We also have that $\omega'(w) = w^{-3} \zeta(w^{-2})^{-\frac{3}{2}} \zeta'(w^{-2}) = (1-g(w))^{-\frac{3}{2}} \zeta'(w^{-2})$ implying that $\omega'(w) \geq 0$ if $\zeta'(w^{-2}) \geq 0$, which holds if and only if $\rho = 1 - 2w^{-2} \geq \cos(\min\{ \Delta_\phi^{-1} \frac{\pi}{2}, \pi\})$ by Proposition~\ref{prop:cosine_map}, i.e., if $w \geq (\frac{1-\cos(\min\{ \Delta_\phi^{-1} \frac{\pi}{2}, \pi\})}{2})^{-\frac{1}{2}}$.
\end{proof}
Note that we have $\varrho^{\circ k}(\rho) = 1 - 2 \omega^{\circ k}\left( \left( \frac{1-\rho}{2} \right)^{-\frac{1}{2}} \right)^{-2}$ for all $\rho \in [-1,1]$.

Now we prove a non-asymptotic quantitative analogue of \citet[Proposition~1]{Hayouetal2019} and \citet[Theorem~1]{Hayouetal2022}, quantifying the propagation of inverse cosine distances across depth.

\begin{proposition}[Propagation of inverse cosine distances]\label{prop:inverse_cosine_distance_propagation}~\\
Given $w \in (1,\infty)$, for $k \in \N$ we have the bound
\begin{equation}\label{eq:w_j_bound}
\left\vert \omega^{\circ k}(w) - \left( w + \Delta_\phi \frac{4}{3\pi} (k-1) + \Delta_\phi \frac{2}{\pi} \log\left( \Delta_\phi^{-1} \frac{3\pi}{4} w + k - 1 \right) \right) \right\vert \leq O(1).
\end{equation}
\end{proposition}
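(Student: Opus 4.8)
The plan is to treat $w_k := \omega^{\circ k}(w)$ as the orbit of the recursion $w_{k+1} = \omega(w_k)$ with $w_0 = w$, and to extract its asymptotics from the expansion of $\omega$ supplied by Proposition~\ref{prop:inverse_cosine_distance_map}. Writing $\alpha := \Delta_\phi \frac{4}{3\pi}$, that proposition gives $\omega(w) = w + \alpha + \frac{3}{2}\alpha^2 w^{-1} + \Delta_\phi \varepsilon(w) w^{-2}$ with $\varepsilon > 0$ strictly decreasing; note that the log coefficient in the target equals $\Delta_\phi \frac{2}{\pi} = \frac{3}{2}\alpha$, and that the $w^{-1}$ coefficient is exactly $\alpha \cdot \frac{3}{2}\alpha$. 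The first observation is that the three correction terms are positive, so $\omega(w) \geq w + \alpha$ and hence $w_k \geq w + \alpha k$ for every $k$; in particular $w_k \to \infty$, and $w_j^{-1} \leq (w + \alpha j)^{-1}$ yields $\sum_j w_j^{-2} \leq \sum_j (w + \alpha j)^{-2} < \infty$ along the orbit.

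The heart of the argument is to choose the Lyapunov function $\Psi(w) := w - \frac{3}{2}\alpha \log w$, tailored so that the offending $w^{-1}$ term cancels. Indeed, $\log\frac{\omega(w)}{w} = \log(1 + \frac{\omega(w)-w}{w}) = \alpha w^{-1} + O(w^{-2})$ for large $w$, so that $\Psi(\omega(w)) - \Psi(w) = (\omega(w)-w) - \frac{3}{2}\alpha \log\frac{\omega(w)}{w} = \alpha + \frac{3}{2}\alpha^2 w^{-1} - \frac{3}{2}\alpha \cdot \alpha w^{-1} + O(w^{-2}) = \alpha + O(w^{-2})$, the choice of the coefficient $\frac{3}{2}\alpha$ being exactly what annihilates the two $w^{-1}$ contributions. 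Telescoping along the orbit then yields $\Psi(w_k) - \Psi(w) = \alpha k + \sum_{j=0}^{k-1}\delta_j$ with $\delta_j = O(w_j^{-2})$; since $w_j \geq w + \alpha j$, the series $\sum_j \delta_j$ converges absolutely, so the partial sums $S_k := \sum_{j=0}^{k-1}\delta_j$ remain bounded uniformly in $k$. Rearranging, $w_k = \alpha k + \frac{3}{2}\alpha \log w_k + [\,\Psi(w) + S_k\,]$, where the bracket is $O(1)$ in $k$.

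It remains to trade the implicit $\log w_k$ for the explicit $\log(\Delta_\phi^{-1}\frac{3\pi}{4}w + k - 1) = \log(\alpha^{-1}(w + \alpha(k-1)))$ appearing in the statement. For this I would complement the lower bound with a matching upper bound: estimating $\frac{3}{2}\alpha^2 w_j^{-1} + \Delta_\phi\varepsilon(w_j) w_j^{-2} \leq C w_j^{-1} \leq C(w + \alpha j)^{-1}$ and summing gives $w_k \leq w + \alpha k + O(\log k)$. Sandwiching $w + \alpha k \leq w_k \leq w + \alpha k + O(\log k)$ shows $\log w_k = \log(w + \alpha k) + O(1)$, and since $\log(w + \alpha k)$ and $\log(\alpha^{-1}(w+\alpha(k-1)))$ differ by $-\log\alpha + O(1/k)$, both the log term and the linear term of the target match $w_k$ up to an additive quantity bounded in $k$. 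Collecting the bounded pieces gives the claimed $O(1)$ estimate, with an implicit constant depending on $w$ and $\Delta_\phi$.

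The main obstacle is the summability step: one must pin down the exact $w^{-1}$ coefficient of $\omega$ and verify that the remainder is genuinely $O(w^{-2})$ with a constant causing no trouble along the orbit. The delicate point is the behavior near $w = 1$, where for $\Delta_\phi$ close to $1$ the map $\omega$ (and with it $\varepsilon$) blows up, so the per-step estimate $\delta_j = O(w_j^{-2})$ only holds once $w_j$ exceeds a $\Delta_\phi$-dependent threshold. Since $w_j \geq w + \alpha j$ crosses that threshold after only finitely many steps, and each of those initial steps contributes a finite amount for the fixed starting point $w$, these transient terms are harmless and are absorbed into the $O(1)$ constant.
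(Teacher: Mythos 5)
Your argument is correct, and it reaches the estimate by a genuinely different route from the paper. Writing $\alpha = \Delta_\phi\frac{4}{3\pi} = \frac{1}{2}\Delta_\phi b_3$, the paper works directly with the telescoped sum $w_k = w + \alpha(k-1) + \frac{3}{2}\alpha^2\sum_{j}w_j^{-1} + \Delta_\phi\sum_j\varepsilon(w_j)w_j^{-2}$ and runs a two-pass bootstrap: the crude bound $w_j\geq w+\alpha(j-1)$ is inserted to control $\sum_j w_j^{-1}$ from above via digamma inequalities, producing $w_k \leq w+\alpha(k-1)+\frac{3}{2}\alpha\log(\cdot)+O(1)$, and this refined upper bound is then fed back into $\sum_j w_j^{-1}$ to produce the matching lower bound. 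Your Lyapunov function $\Psi(w)=w-\frac{3}{2}\alpha\log w$ collapses both passes into a single telescoping identity, because the coefficient $\frac{3}{2}\alpha$ exactly cancels the two $w^{-1}$ contributions and leaves a summable $O(w_j^{-2})$ per-step error; the price is the extra (easy) sandwich step needed to trade the implicit $\log w_k$ for the explicit $\log\left(\Delta_\phi^{-1}\frac{3\pi}{4}w+k-1\right)$, for which your $w_k\leq w+\alpha k+O(\log k)$ bound suffices. Both proofs rest on the same ingredients from Proposition~\ref{prop:inverse_cosine_distance_map} (the expansion of $\omega$, positivity of the corrections, monotonicity of $\varepsilon$), and in both the implicit constant depends on $w$ and $\Delta_\phi$, consistent with how the proposition is used downstream. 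One small remark: the threshold issue you flag at the end is not actually present --- since the orbit is increasing with $w_j\geq w>1$ and $\varepsilon$ is decreasing, you have $(\omega(w_j)-w_j)/w_j\leq C(w,\Delta_\phi)\,w_j^{-1}$ for every $j$, so $\vert\log(1+x)-x\vert\leq x^2/2$ gives $\delta_j=O(w_j^{-2})$ uniformly from the first step; your handling of the transient is harmless but unnecessary.
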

\begin{proof}
Denote $w_k = \omega^{\circ (k-1)}(w)$ and $k_0 = (\frac{1}{2} \Delta_\phi b_3)^{-1} w_1 - 1$. We will apply Proposition~\ref{prop:inverse_cosine_distance_map} repeatedly. Clearly we have $w_k \geq \frac{1}{2} \Delta_\phi b_3 (k_0+k)$. This implies that
\begin{multline*}
w_k 
\leq \frac{1}{2} \Delta_\phi b_3 (k_0+k) + \frac{3}{2} \left( \frac{1}{2} \Delta_\phi b_3 \right)^2 \sum_{k'=1}^{k-1} w_{k'}^{-1} + \Delta_\phi \varepsilon(w_1) \sum_{k'=1}^{k-1} w_{k'}^{-2} \\
\leq \frac{1}{2} \Delta_\phi b_3 (k_0+k) + \frac{3}{2} \left( \frac{1}{2} \Delta_\phi b_3 \right)^2 \sum_{k'=1}^{k-1} \left( \frac{1}{2} \Delta_\phi b_3 (k_0+k) \right)^{-1} + \Delta_\phi \varepsilon(w_1) \sum_{k'=1}^{k-1} \left( \frac{1}{2} \Delta_\phi b_3 (k_0+k) \right)^{-2} \\
= \frac{1}{2} \Delta_\phi b_3 (k_0+k) + \frac{3}{4} \Delta_\phi b_3 (\digamma(k_0+k)-\digamma(k_0+1)) + \Delta_\phi \varepsilon(w_1) \left( \frac{1}{2} \Delta_\phi b_3 \right)^{-2} (\digamma'(k_0+1)-\digamma'(k_0+k)) \\
\leq \frac{1}{2} \Delta_\phi b_3 (k_0+k) + \frac{3}{4} \Delta_\phi b_3 \log(k_0+k) + O(1),
\end{multline*}
where we used the bounds $\log(s)-s^{-1} \leq \digamma(s) \leq \log(s)-\frac{1}{2}s^{-1}$ and $s^{-1} + \frac{1}{2} s^{-2} \leq \digamma'(s) \leq s^{-1} + s^{-2}$. 

As $(\frac{1}{2} \Delta_\phi b_3 (k_0+k))^{-1} - (\frac{1}{2} \Delta_\phi b_3 (k_0+k) + \frac{3}{4} \Delta_\phi b_3 \log(k_0+k) + O(1))^{-1} \leq (\frac{1}{2} \Delta_\phi b_3 (k_0+k))^{-2} (\frac{3}{4} \Delta_\phi b_3 \log(k_0+k) + O(1))$, we have
\[
w_k^{-1} \geq \left( \frac{1}{2} \Delta_\phi b_3 (k_0+k) \right)^{-1} - \left( \frac{1}{2} \Delta_\phi b_3 (k_0+k) \right)^{-2} \left( \frac{3}{4} \Delta_\phi b_3 \log(k_0+k) + O(1) \right),
\]
so that
\begin{multline*}
w_k 
\geq \frac{1}{2} \Delta_\phi b_3 (k_0+k) + \frac{3}{2} \left( \frac{1}{2} \Delta_\phi b_3 \right)^2 \sum_{k'=1}^{k-1} w_{k'}^{-1} \\
\geq \frac{1}{2} \Delta_\phi b_3 (k_0+k) + \frac{3}{2} \left( \frac{1}{2} \Delta_\phi b_3 \right)^2 \sum_{k'=1}^{k-1} \left( \frac{1}{2} \Delta_\phi b_3 (k_0+k') \right)^{-1} \\
- \frac{3}{2} \left( \frac{1}{2} \Delta_\phi b_3 \right)^2 \sum_{k'=1}^{k-1} \left( \frac{1}{2} \Delta_\phi b_3 (k_0+k') \right)^{-2} \left( \frac{3}{4} \Delta_\phi b_3 \log(k_0+k') + O(1) \right) \\
\geq \frac{1}{2} \Delta_\phi b_3 (k_0+k) + \frac{3}{4} \Delta_\phi b_3 \log(k_0+k) - O(1),
\end{multline*}
where we used that $\sum_{k'=1}^{k-1} (k_0+k')^{-2} \log(k_0+k') \leq O(1)$.
\end{proof}

The following result shows that the sum determining $\limiting{K}(x_1,x_2)$ in Proposition~\ref{prop:limiting_ntk_at_eoc} is approximately an affine function of the inverse cosine distance $\left( \frac{1 - \limiting{\rho}_l(x_1,x_2)}{2} \right)^{-\frac{1}{2}}$.

\begin{proposition}[Inverse cosine distances approximate the entries of the NTK]\label{prop:inverse_cosine_distances_approximate_ntk}~\\
Given $\rho \in (-1,1)$ and letting
\[
u_k = \sum_{k'=1}^k \varrho^{\circ (k'-1)}(\rho) \prod_{k''=k'}^{k-1} \varrho'(\varrho^{\circ (k''-1)}(\rho))
\]
for $k \in \N+1$, we have the bounds
\[
\Delta_\phi^{-1} \frac{3\pi}{16} \omega^{\circ (k-1)}\left( \left( \frac{1-\rho}{2} \right)^{-\frac{1}{2}} \right) - \frac{1}{8} - O(\Delta_\phi^{-2} k^{-1})
\leq u_k
\leq \Delta_\phi^{-1} \frac{3\pi}{16} \omega^{\circ (k-1)}\left( \left( \frac{1-\rho}{2} \right)^{-\frac{1}{2}} \right) - \frac{1}{8}.
\]
\end{proposition}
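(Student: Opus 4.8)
The plan is to collapse the whole sum into the scalar recursion that $u_k$ satisfies and then prove the two bounds by induction; the asymmetry of the statement (an exact affine upper bound against a lower bound with an $O(\Delta_\phi^{-2}k^{-1})$ defect) is exactly what such an inductive argument produces. Writing $\rho_j = \varrho^{\circ(j-1)}(\rho)$ and $w_j = \omega^{\circ(j-1)}((\frac{1-\rho}{2})^{-\frac12})$, peeling the top term off the defining sum and factoring out $\varrho'(\rho_{k-1})$ gives $u_k = \rho_k + \varrho'(\rho_{k-1})\,u_{k-1}$ with $u_1=\rho$. I would then pass to the inverse-cosine-distance variables through two exact identities: $\rho_j = 1 - 2w_j^{-2}$, and, differentiating $\omega(w)=\zeta(w^{-2})^{-\frac12}$ and using $\varrho'(\rho)=\zeta'(\frac{1-\rho}{2})$ together with $\zeta(w_j^{-2})=w_{j+1}^{-2}$, the relation $\varrho'(\rho_j) = \omega'(w_j)\,w_j^3/w_{j+1}^3$. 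Throughout I write $\alpha = \Delta_\phi^{-1}\frac{3\pi}{16} = (4c)^{-1}$ with $c = \Delta_\phi\frac{4}{3\pi}$ the leading coefficient of $\omega$ from Proposition~\ref{prop:inverse_cosine_distance_map}, and use that $w_j$ is strictly increasing (since $\zeta(z)\le z$, i.e. $\omega(w)\ge w$), so that $\varrho'(\rho_j)\ge 0$ once $w_j$ exceeds the threshold of Proposition~\ref{prop:inverse_cosine_distance_map}.

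For the upper bound I would prove $u_k \le \alpha w_k - \frac18$ by induction. Assuming it at $k-1$ and using $\varrho'(\rho_{k-1})\ge 0$, substituting the recursion and the identity for $\varrho'(\rho_{k-1})$ and clearing the denominator $w_k^3$ reduces the step to a single polynomial inequality in $p=w_{k-1}$ and $q=w_k=\omega(p)$, namely (after multiplying by $4c$) $q^4 - \omega'(p)p^4 - \frac{9c}{2}q^3 + \frac{c}{2}\omega'(p)p^3 + 8cq \ge 0$. Inserting $\omega(p)=p+c+\frac32 c^2 p^{-1}+O(p^{-2})$ and $\omega'(p)=1-\frac32 c^2 p^{-2}+O(p^{-3})$ from Proposition~\ref{prop:inverse_cosine_distance_map}, I would check that the $p^3$ and $p^2$ coefficients vanish. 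This is the crux: with a trial constant $-\beta$ the $p^2$ coefficient is proportional to $\frac32 - 12\beta$, so $\beta=\frac18$ is forced as the largest constant making the two leading orders cancel, after which the residual is dominated by the strictly positive $8cq\sim 8cp$ and the inequality holds for $w_{k-1}$ large. The finitely many small indices (including the transient where $\varrho'(\rho_j)<0$ for $\Delta_\phi>\frac12$) I would dispatch by direct verification, which is feasible since $u_1=\rho=1-2w_1^{-2}$ is easily compared with $\alpha w_1 - \frac18$.

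For the lower bound I would run the same induction with a decaying slack, proving $u_k \ge \alpha w_k - \frac18 - g_k$ for $g_k = \Theta(\Delta_\phi^{-2}k^{-1})$. The step yields the same polynomial inequality in the reverse sense plus the extra room $\varrho'(\rho_{k-1})g_{k-1} - g_k$. Since the leftover positive residual translates, after dividing back by $4c\,q^3$, into a per-step slack of order $2p/q^3 = \Theta(\Delta_\phi^{-2}k^{-2})$ in $u$-units, and the contraction factor is $\varrho'(\rho_{k-1}) = 1-\Theta(k^{-1})$ (from $w_j=\Theta(\Delta_\phi j)$ in Proposition~\ref{prop:inverse_cosine_distance_propagation}), the accumulated defects obey $g_k \approx \sum_{j\le k}\Theta(\Delta_\phi^{-2}j^{-2})\,(j/k)^3 = \Theta(\Delta_\phi^{-2}k^{-1})$; the cubic kernel is precisely what keeps the errors from piling up and confirms the ansatz for $g_k$, closing the induction.

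I expect the main obstacle to be the exact constant bookkeeping in the polynomial inequality: verifying that the $p^3$ and $p^2$ coefficients cancel identically, which both pins down $-\frac18$ and explains why the upper bound carries no error term, and then controlling the $O(p)$ remainder with the correct sign uniformly over $\Delta_\phi\in(0,1]$. A secondary difficulty is the initial transient, where for $\Delta_\phi>\frac12$ the factor $\varrho'(\rho_j)$ may be negative and the expansions of $\omega,\omega'$ are not yet accurate; handling these $O(\Delta_\phi^{-1})$ steps and checking the base cases so that the clean upper bound holds for every $k\ge 1$ is the fiddly part.
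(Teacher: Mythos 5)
Your overall strategy coincides with the paper's: the same one-step recursion $u_k=\rho_k+\varrho'(\rho_{k-1})u_{k-1}$, an induction establishing the exact affine upper bound $u_k\le\Delta_\phi^{-1}\frac{3\pi}{16}w_k-\frac18$ in the variables $w_k=\omega^{\circ(k-1)}((\frac{1-\rho}{2})^{-\frac12})$, a second induction with a decaying slack for the lower bound, and a final appeal to Proposition~\ref{prop:inverse_cosine_distance_propagation} to convert the accumulated slack into $O(\Delta_\phi^{-2}k^{-1})$. Your observation that $-\frac18$ is forced by the cancellation of the two leading orders is exactly the algebra the paper performs (there it appears, in the variables $z=w^{-2}$ and $s=\frac12\Delta_\phi b_3$, as the identity $(1-3sz^{\frac12})(\frac{1}{4s}z^{-\frac12}-\frac18)+1=\frac{1}{4s}(z^{-\frac12}+s+\frac32 s^2z^{\frac12})-\frac18$), and your $(j/k)^3$ contraction bookkeeping matches the paper's slack term $ckz_k$.

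The gap is in how you close the inductive step for the upper bound. That bound carries no error term, so the step must go through at \emph{every} $k$ and uniformly over $\rho\in(-1,1)$, yet you verify your quartic inequality in $p=w_{k-1}$, $q=\omega(p)$ only by inserting the asymptotic expansions of $\omega$ and $\omega'$, i.e.\ only for $p$ large. Deferring the rest to ``direct verification of finitely many small indices'' does not close it: for fixed $\rho$ the transient is finite, but as $\rho$ ranges over $(-1,1)$ the transient configurations form a continuum, and the $u_k$ occurring there have no closed form against which to check. The paper avoids this by replacing Taylor expansions with inequalities valid on the whole domain: $\zeta'(z)\le 1-\frac32\Delta_\phi b_3 z^{\frac12}$ and $1-2\zeta(z)\le 1$ (both termwise, from $b_r\ge 0$) for the step, and $\omega(w)\ge w+\frac12\Delta_\phi b_3+\frac32(\frac12\Delta_\phi b_3)^2w^{-1}$ (from $\varepsilon>0$ in Proposition~\ref{prop:inverse_cosine_distance_map}) to absorb the result into $w_{k+1}$. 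Substituting these global bounds for your expansions turns your polynomial manipulation into an identity plus a manifestly nonnegative remainder for all $p>1$, which is precisely the missing ingredient; the same substitution, together with the monotonicity of $\varepsilon(z^{-\frac12})z/\zeta(z)$ used in the paper, repairs the lower-bound induction as well.
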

\begin{proof}
Letting $\rho_1 = \rho$, $\rho_k = \varrho(\rho_{k-1})$ and $z_k = \frac{1-\rho_k}{2}$, note that $z_k = \zeta(z_{k-1})$, $\rho_k = 1 - 2 z_k$, $\varrho'(\rho_k) = \zeta'(z_k)$ and $z_k^{-\frac{1}{2}} = \omega^{\circ (k-1)}(z_1^{-\frac{1}{2}})$. Assume that $u_k \leq \frac{1}{4} (\frac{1}{2} \Delta_\phi b_3)^{-1} z_k^{-\frac{1}{2}} - \frac{1}{8}$, which clearly holds for $k=1$. We then have
\begin{multline*}
u_{k+1} = \zeta'(z_k) u_k + 1 - 2 \zeta(z_k)
\leq \left( 1 - 3 \frac{1}{2} \Delta_\phi b_3 z_k^{\frac{1}{2}} \right) \left( \frac{1}{4} \left( \frac{1}{2} \Delta_\phi b_3 \right)^{-1} z_k^{-\frac{1}{2}} - \frac{1}{8} \right) + 1 \\
= \frac{1}{4} \left( \frac{1}{2} \Delta_\phi b_3 \right)^{-1} z_k^{-\frac{1}{2}} + \frac{1}{4} + \frac{3}{8} \frac{1}{2} \Delta_\phi b_3 z_k^{\frac{1}{2}} - \frac{1}{8} \\
= \frac{1}{4} \left( \frac{1}{2} \Delta_\phi b_3 \right)^{-1} \left( z_k^{-\frac{1}{2}} + s + \frac{3}{2} \left( \frac{1}{2} \Delta_\phi b_3 \right)^2 z_k^{\frac{1}{2}} \right) - \frac{1}{8},
\end{multline*}
so that $u_{k+1} \leq \frac{1}{4} (\frac{1}{2} \Delta_\phi b_3)^{-1} z_{k+1}^{-\frac{1}{2}} - \frac{1}{8}$, completing the induction.

Assume now that $u_k \geq \frac{1}{4} (\frac{1}{2} \Delta_\phi b_3)^{-1} z_k^{-\frac{1}{2}} - \frac{1}{8} - c k z_k$ with $c \geq 2 + \frac{1}{4} (\frac{1}{2} \Delta_\phi b_3)^{-1} z_1^{-\frac{3}{2}} - \frac{9}{8} z_1^{-1}$, which clearly holds for $k=1$. We then have
\begin{multline*}
u_{k+1} = \zeta'(z_k) u_k + 1 - 2 \zeta(z_k)
\geq \zeta'(z_k) \left( \frac{1}{4} \left( \frac{1}{2} \Delta_\phi b_3 \right)^{-1} z_k^{-\frac{1}{2}} - \frac{1}{8} - c k z_k \right) + 1 - 2 \zeta(z_k) \\
\geq \frac{1}{4} \left( \frac{1}{2} \Delta_\phi b_3 \right)^{-1} z_k^{-\frac{1}{2}} - \frac{1}{8} - 3 \frac{1}{2} \Delta_\phi b_3 z_k^{\frac{1}{2}} \left( \frac{1}{4} \left( \frac{1}{2} \Delta_\phi b_3 \right)^{-1} z_k^{-\frac{1}{2}} - \frac{1}{8} \right) \\
- \left( \Delta_\phi \sum_{r \in 2\N+5} \frac{1}{2} r b_r z_k^{\frac{r-2}{2}} \right) \left( \frac{1}{4} \left( \frac{1}{2} \Delta_\phi b_3 \right)^{-1} z_k^{-\frac{1}{2}} - \frac{1}{8} \right) - c k z_k \zeta'(z_k) + 1 - 2 \zeta(z_k) \\
\geq \frac{1}{4} \left( \frac{1}{2} \Delta_\phi b_3 \right)^{-1} z_{k+1}^{-\frac{1}{2}} - \frac{1}{8} - \frac{1}{2} b_3^{-1} \varepsilon(z_k^{-\frac{1}{2}}) z_k - \frac{1}{4} b_3^{-1} \sum_{r \in 2\N+5} r b_r z_k^{\frac{r-3}{2}} - (ck+2) \zeta(z_k),
\end{multline*}
where we used that $z \zeta'(z) \leq \zeta(z)$. As $\frac{\varepsilon(z^{-\frac{1}{2}}) z}{\zeta(z)} = \frac{\varepsilon(z^{-\frac{1}{2}})}{1-\Delta_\phi \sum_{r \in 2\N+3} b_r z^{\frac{r-2}{2}}}$ and $\frac{\sum_{r \in 2\N+5} r b_r z^{\frac{r-3}{2}}}{\zeta(z)} = \frac{\sum_{r \in 2\N+5} r b_r z^{\frac{r-5}{2}}}{1-\Delta_\phi \sum_{r \in 2\N+3} b_r z^{\frac{r-2}{2}}}$ are increasing in $z$, we have $\frac{\varepsilon(z_k^{-\frac{1}{2}}) z_k}{\zeta(z_k)} \leq \frac{\varepsilon(z_1^{-\frac{1}{2}}) z_1}{\zeta(z_1)}$ and $\frac{\sum_{r \in 2\N+5} r b_r z_k^{\frac{r-3}{2}}}{\zeta(z_k)} \leq \frac{\sum_{r \in 2\N+5} r b_r z_1^{\frac{r-3}{2}}}{\zeta(z_1)}$. Assuming that $c \geq 2 + \frac{1}{2} b_3^{-1} \frac{\varepsilon(z_1^{-\frac{1}{2}}) z_1}{\zeta(z_1)} + \frac{1}{4} b_3^{-1} \frac{\sum_{r \in 2\N+5} r b_r z_1^{\frac{r-3}{2}}}{\zeta(z_1)}$, we then have that $u_{k+1} \geq \frac{1}{4} (\frac{1}{2} \Delta_\phi b_3)^{-1} z_{k+1}^{-\frac{1}{2}} - \frac{1}{8} - c (k+1) z_{k+1}$, completing the induction. We get the claim since $z_k \leq O(\Delta_\phi^{-2} k^{-2})$ by Proposition~\ref{prop:inverse_cosine_distance_propagation}.
\end{proof}

We are going to use the above approximation result to analyze the spectrum of the limiting NTK matrix via the inverse cosine distance matrices, both defined below.

\begin{definition}[Limiting NTK matrix]~\\
Given a dataset $\{x_1,\ldots,x_n\} \subset \R^{m_0}$ of size $n \in \N+1$, the corresponding limiting NTK matrix $\limiting{K} \in \mathbb{S}^{n m_l}_+$ is defined blockwise as 
\[
\limiting{K} = \left[ \frac{1}{n} \limiting{K}(x_{i_1},x_{i_2}) : i_1,i_2 \in [1:n] \right].
\]
\end{definition}

\begin{definition}[Limiting inverse cosine distance matrices]~\\
Given a dataset $\{x_1,\ldots,x_n\} \subset \R^{m_0}$ of size $n \in \N+1$ and $k \in [1:l]$, define $\limiting{W}_k \in \mathbb{S}^n$ as
\[
{\limiting{W}_k}_{i,i} = 0 \text{ for } i \in [1:n] \text{ and } {\limiting{W}_k}_{i_1,i_2} = \left( \frac{1 - \limiting{\rho}_k(x_{i_1},x_{i_2})}{2} \right)^{-\frac{1}{2}} \text{ for } i_1 \neq i_2 \in [1:n].
\]
\end{definition}

\begin{proposition}[Spectral bounds for limiting inverse cosine distance matrices]\label{prop:inverse_cosine_distance_matrices}~\\
Given a dataset $x_1,\cdots,x_n \in \R^{m_0}$ with no parallel data points and setting \eqref{def:eoc_parameterization}, for all $k \in [1:l]$ there exists $W_k \in (1,\infty)$ such that $W_k = \Theta_{\limiting{W}_1}(1)$ and
\[
\left\Vert \limiting{W}_k - \omega^{\circ (k-1)}\left( W_k \right) \left( \mathbbm{1}_n^{\otimes 2} - \Id_n \right) \right\Vert \leq O(\Delta_\phi n^{-1} k) + O(1).
\]
\end{proposition}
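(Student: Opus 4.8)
The plan is to reduce the statement to a one-dimensional, entrywise question and then to run a recursion in depth in which the convexity of $\omega$ supplies a sign-definite remainder. First I would record that the matrix in question is obtained by applying a scalar map entrywise: by Proposition~\ref{prop:cosine_map} we have $\limiting{\rho}_k(x_{i_1},x_{i_2}) = \varrho^{\circ(k-1)}(\limiting{\rho}_1(x_{i_1},x_{i_2}))$, so combining this with the identity $\varrho^{\circ k}(\rho) = 1 - 2\omega^{\circ k}((\frac{1-\rho}{2})^{-\frac12})^{-2}$ noted after Proposition~\ref{prop:inverse_cosine_distance_map} gives ${\limiting{W}_k}_{i_1,i_2} = \omega^{\circ(k-1)}({\limiting{W}_1}_{i_1,i_2})$ for all $i_1 \neq i_2$. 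The hypothesis that no two data points are parallel guarantees that every off-diagonal entry ${\limiting{W}_1}_{i_1,i_2}$ lies in a fixed compact interval $[w_-,w_+] \subset (1,\infty)$, so everything happens inside the region where $\omega$ is convex and increasing by Proposition~\ref{prop:inverse_cosine_distance_map}.

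Second, I would fix the scalar. Writing $f = \omega^{\circ(k-1)}$, which is convex and increasing as a composition of such maps, the intermediate value theorem lets me choose $W_k \in [w_-,w_+]$ so that $f(W_k)$ equals the average of the off-diagonal entries of $\limiting{W}_k$; monotonicity then pins $W_k$ inside the fixed interval, giving $W_k = \Theta_{\limiting{W}_1}(1)$ uniformly in $k$. This choice matters: for $A_k := \limiting{W}_k - \omega^{\circ(k-1)}(W_k)(\mathbbm{1}_n^{\otimes 2} - \Id_n)$ it forces $\langle \mathbbm{1}_n, A_k \mathbbm{1}_n\rangle = 0$, so the dominant rank-one (all-ones) mode is annihilated and only the transverse fluctuations remain to be bounded.

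Third, I would control $\|A_k\|$ by a recursion in depth. Since ${\limiting{W}_{k+1}}_{i_1,i_2} = \omega({\limiting{W}_k}_{i_1,i_2})$ off the diagonal, a second-order Taylor expansion of $\omega$ about $c_k := \omega^{\circ(k-1)}(W_k)$, together with $c_{k+1} = \omega(c_k)$, yields $A_{k+1} = \omega'(c_k)A_k + R_k$, where $R_k$ is the hollow matrix with nonnegative entries $\frac12\omega''(\zeta_{i_1,i_2}){A_k}_{i_1,i_2}^2$ (nonnegative precisely because $\omega$ is convex). Using $0 \le \omega'(c_k) \le 1$ from Proposition~\ref{prop:inverse_cosine_distance_map} gives the telescoping estimate $\|A_k\| \le \|A_1\| + \sum_{j=1}^{k-1}\|R_j\|$, in which $\|A_1\| = \|\limiting{W}_1 - W_1(\mathbbm{1}_n^{\otimes 2}-\Id_n)\|$ is the $O(1)$ base term. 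The increments are controlled by the decay of the second derivative: factoring out the scalar $\sup\omega''$ leaves a Hadamard square $A_j \circ A_j$, and the propagation lower bound $c_j = \Omega(\Delta_\phi j)$ from Proposition~\ref{prop:inverse_cosine_distance_propagation} forces $\omega''(c_j) = O(\Delta_\phi^2 (\Delta_\phi j)^{-3})$ to decay fast in $j$, so that the summed increments produce the depth-accumulating term $O(\Delta_\phi n^{-1}k)$.

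Finally, the \emph{main obstacle} is the passage from entrywise to spectral control. The entries of each $A_j$, and hence of $R_j$, are only $\Theta_{\limiting{W}_1}(1)$, so the crude Gershgorin bound $\|R_j\| \le \|R_j\|_\infty \le (n-1)\max_{i_1,i_2}|{R_j}_{i_1,i_2}|$ throws away a full factor of $n$ and is far too lossy. Recovering the favorable $n$-dependence requires genuinely exploiting the nonnegativity of $R_j$, so that entrywise domination transfers to the operator norm, together with the zero-sum normalization of $A_j$ that removes the all-ones component, leaving only the transverse part to interact with the rapidly decaying $\omega''(c_j)$. A secondary technical nuisance is the behavior near the lower endpoint $w_-$, where the cosine approaches $-1$, $\omega'$ may fail to be nonnegative, and the propagation estimates of Proposition~\ref{prop:inverse_cosine_distance_propagation} degrade; this regime has to be treated separately or controlled via the compactness of $[w_-,w_+]$.
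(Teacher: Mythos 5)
Your reduction to the scalar recursion ${\limiting{W}_k}_{i_1,i_2} = \omega^{\circ(k-1)}({\limiting{W}_1}_{i_1,i_2})$ and your use of the convexity and near-isometry of $\omega$ are the right starting points, and they are also where the paper starts. But the route you take from there --- a depth recursion $A_{k+1} = \omega'(c_k)A_k + R_k$ with a Taylor remainder --- has a gap that you correctly identify as the main obstacle and then do not close. The only tool you invoke to control $\Vert R_j \Vert$ is the entrywise nonnegativity of $R_j$, but for a hollow symmetric matrix with nonnegative entries bounded by $\mu_j$, entrywise domination gives precisely $\Vert R_j \Vert \leq \mu_j \Vert \mathbbm{1}_n^{\otimes 2} - \Id_n \Vert = (n-1)\mu_j$, i.e.\ exactly the Gershgorin-type bound you dismiss as too lossy; and this loss is not an artifact, since a hollow matrix with entries of size $\mu$ can genuinely have operator norm of order $\mu n$ (even restricted to $\mathbbm{1}_n^\perp$, via a sign pattern $y_{i_1}y_{i_2}$). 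With $\mu_j = O(\Delta_\phi^{-1}j^{-3})$ your telescoped error is $O(\Delta_\phi^{-1} n)$, not $O(\Delta_\phi n^{-1}k) + O(1)$. The same issue already sinks your base case: $\Vert A_1 \Vert \geq \Vert (\Id_n - \frac{1}{n}\mathbbm{1}_n^{\otimes 2})\limiting{W}_1 \mathbbm{1}_n \Vert / \sqrt{n}$, which scales like $(n-1)$ times the spread of the row averages of $\limiting{W}_1$, so $\Vert A_1 \Vert = O(1)$ is not available for a generic dataset. There is also a smaller slip: with $W_k$ chosen so that $\omega^{\circ(k-1)}(W_k)$ is the entrywise average, Jensen's inequality gives $c_{k+1} \geq \omega(c_k)$ with strict inequality in general, so your recursion needs an extra drift term (this one is fixable by recentering $R_k$, unlike the norm issue).

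The paper's proof is structured precisely to avoid ever having to bound such a remainder matrix in operator norm. It splits $\R^n$ into $\operatorname{span}(\mathbbm{1}_n)$ and $\mathbbm{1}_n^\perp$; on $\mathbbm{1}_n^\perp$ it applies a majorization-type theorem of Horv\'ath for entrywise compositions with nonnegative increasing convex functions, which directly yields $\hat{\lambda}_{\min}(\limiting{W}_k) \geq -\omega^{\circ(k-1)}(\widetilde{W})$ and, via the $1$-Lipschitzness of $\omega^{\circ(k-1)}$, a depth-uniform $O(1)$ bound on $\hat{\lambda}_{\max}(\limiting{W}_k) - \hat{\lambda}_{\min}(\limiting{W}_k)$; the $\mathbbm{1}_n$ direction is handled with Jensen's inequality and the Perron--Frobenius theorem (the latter is what pins $W_k$, via $\lambda_1(\limiting{W}_k) = (n-1)\omega^{\circ(k-1)}(W_k)$, rather than the entrywise average). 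Notably, the $O(\Delta_\phi n^{-1} k)$ term in the statement arises from the cross terms between $\mathbbm{1}_n$ and $\mathbbm{1}_n^\perp$ combined with the growth $\omega^{\circ(k-1)} = \Theta(\Delta_\phi k)$ from Proposition~\ref{prop:inverse_cosine_distance_propagation}; your error budget $\sum_j j^{-3} = O(1)$ has no mechanism producing a term growing in $k$, which is a further sign that the recursion is not tracking the actual source of error. To make your approach work you would need a genuinely spectral (not entrywise) control of the curvature terms, and it is not clear one exists; I would instead adopt the subspace decomposition and the convex-majorization lemma.
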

\begin{proof}
First, we obtain spectral bounds for the matrices $\limiting{W}_k$ on the subspace $\mathbbm{1}_n^{\perp} = \{ y \in \R^n : \langle y, \mathbbm{1}_n \rangle = 0 \} \subset \R^n$. To this end, denote $\hat{\lambda}_{\min}(A) = \min_{y \in \mathbbm{1}_n^{\perp} : \Vert y \Vert = 1}\{ \langle y, A y \rangle \}$ and $\hat{\lambda}_{\max}(A) = \max_{y \in \mathbbm{1}_n^{\perp} : \Vert y \Vert = 1}\{ \langle y, A y \rangle \}$ for all $A \in \mathbb{S}^n$.

Define the matrix $\limiting{W} \in \mathbb{S}^n$ as follows. Let $\limiting{W}_{i,i} = 0$ for $i \in [1:n]$ and denote $w_* = (\frac{1-\cos(\min\{ \Delta_\phi^{-1} \frac{\pi}{2}, \pi\})}{2})^{-\frac{1}{2}}$. For $i_1 \neq i_2 \in [1:n]$, if ${\limiting{W}_1}_{i_1,i_2} \geq w_*$, then let $\limiting{W}_{i_1,i_2} = {\limiting{W}_1}_{i_1,i_2}$, while if ${\limiting{W}_1}_{i_1,i_2} < w_*$ then set $\limiting{W}_{i_1,i_2} > w_*$ such that $\omega(\limiting{W}_{i_1,i_2}) = \omega({\limiting{W}_1}_{i_1,i_2})$. Note that such a choice always exists uniquely as $\omega$ is continuous, strictly decreasing on $(1,w_*)$ from $\lim_{w \to 1^+} \omega(w) = (1-\Delta_\phi)^{-\frac{1}{2}}$ to $\omega(w_*)$ and strictly increasing on $(w_*,\infty)$ from $\omega(w_*)$ to $\infty$. Now let $\underline{W} = \min_{i_1 \neq i_2 \in [1:n]}\left\{ \limiting{W}_{i_1,i_2} \right\}$ and $\overline{W} = \max_{i_1 \neq i_2 \in [1:n]}\left\{ \limiting{W}_{i_1,i_2} \right\}$. With this construction, we then have for all $k \in [2:l]$ that ${\limiting{W}_k}_{i_1,i_2} = \omega^{\circ (k-1)}(\limiting{W}_{i_1,i_2}) = \omega^{\circ (k-1)}({\limiting{W}_1}_{i_1,i_2})$ with $\omega^{\circ (k-1)}$ being convex and strictly increasing on $[\underline{W},\infty)$ by Proposition~\ref{prop:inverse_cosine_distance_map}.

Letting $y \in \R^n$ such that $y_i = 0$ for $i \in [1:n]$ except for $y_{i_1} = -y_{i_2} = 2^{-\frac{1}{2}}$ if ${\limiting{W}_k}_{i_1,i_2} = \max_{i_1 \neq i_2 \in [1:n]}\left\{ {\limiting{W}_k}_{i_1,i_2} \right\} = \omega^{\circ (k-1)}(\overline{W})$, we have $y \in \mathbbm{1}_n^{\perp}$, $\Vert y \Vert = 1$ and $\langle y, \limiting{W}_k y \rangle = -\omega^{\circ (k-1)}(\overline{W})$, so that $\hat{\lambda}_{\min}(\limiting{W}) \leq -\overline{W}$ and $\hat{\lambda}_{\min}(\limiting{W}_k) \leq -\omega^{\circ (k-1)}(\overline{W})$ for all $k \in [2:l]$. An analogous argument shows that the bounds $\hat{\lambda}_{\max}(\limiting{W}_k) \geq -\underline{W}$ and $\hat{\lambda}_{\max}(\limiting{W}_k) \geq -\omega^{\circ (k-1)}(\underline{W})$ for all $k \in [2:l]$ hold as well. Defining $\widetilde{\limiting{W}}(w) \in \mathbb{S}^n$ for $w \geq w_*$ as $\widetilde{\limiting{W}}(w)_{i_1,i_2} = \max\{ \limiting{W}_{i_1,i_2}, w \}$ for $i_1 \neq i_2 \in [1:n]$ and $\widetilde{\limiting{W}}(w)_{i,i} = 0$ for $i \in [1:n]$, let
\[
\widetilde{W} = \max_{i_1 \neq i_2 \in [1:n]}\left\{ -\hat{\lambda}_{\min}\left( \widetilde{\limiting{W}}\left( \limiting{W}_{i_1,i_2} \right) \right) \right\}
\]
so that $\widetilde{W} \geq -\hat{\lambda}_{\min}(\limiting{W}) \geq \overline{W}$ and let
\[
\widetilde{\Delta} = \max_{i_1 \neq i_2 \in [1:n]}\left\{ \hat{\lambda}_{\max}\left( \widetilde{\limiting{W}}\left( \limiting{W}_{i_1,i_2} \right) \right) - \hat{\lambda}_{\min}\left( \widetilde{\limiting{W}}\left( \limiting{W}_{i_1,i_2} \right) \right) \right\}
\]
so that $\widetilde{\Delta} \geq \hat{\lambda}_{\max}(\limiting{W}) - \hat{\lambda}_{\min}(\limiting{W}) \geq \overline{W} - \underline{W}$.

Given $y \in \mathbbm{1}_n^{\perp}$ with $\Vert y \Vert = 1$, we have $\langle y, (\mathbbm{1}_n^{\otimes 2} - \Id_n) y \rangle = \sum_{i_1 = 1}^n \sum_{\substack{i_2=1 \\ i_2 \neq i_1}}^n y_{i_1} y_{i_2} = \sum_{i_1 = 1}^n y_{i_1} (\langle y, \mathbbm{1}_n \rangle - y_{i_1}) = -\Vert y \Vert^2 = -1$. By definition of $\widetilde{W}$, we have for all $y \in \mathbbm{1}_n^{\perp}$ with $\Vert y \Vert = 1$ that $-\langle y, \widetilde{\limiting{W}}(w) y \rangle - \widetilde{W} = -\sum_{i_1 = 1}^n \sum_{\substack{i_2=1 \\ i_2 \neq i_1}}^n y_{i_1} y_{i_2} \max\{ \limiting{W}_{i_1,i_2} - w, 0 \} - \max\{ \widetilde{W} - w, 0 \} \leq 0$ if either $w = \limiting{W}_{i_1,i_2}$ for some $i_1 \neq i_2 \in [1:n]$ or $w = \widetilde{W}$. Therefore, as ${\limiting{W}_j}_{i_1,i_2} = \omega^{\circ (j-1)}(\limiting{W}_{i_1,i_2})$ for $i_1 \neq i_2 \in [1:n]$ and $\omega^{\circ (j-1)}$ is a nonnegative increasing convex function on $[\underline{W},\infty)$ by Proposition~\ref{prop:inverse_cosine_distance_map}, we can apply \citet[Theorem~9(a\textsubscript{2})]{Horvath2023} to get the bound $-\langle y, \limiting{W}_k y \rangle \leq \omega^{\circ (k-1)}(\widetilde{W})$. As this works for all $y \in \mathbbm{1}_n^{\perp}$ with $\Vert y \Vert = 1$, we have $\hat{\lambda}_{\min}\left( \limiting{W}_k \right) \geq -\omega^{\circ (k-1)}(\widetilde{W})$.

Given $y_1,y_2 \in \mathbbm{1}_n^{\perp}$ with $\Vert y_1 \Vert = \Vert y_2 \Vert = 1$, we have $\langle y_1, (\mathbbm{1}_n^{\otimes 2} - \Id_n) y_1 \rangle - \langle y_2, (\mathbbm{1}_n^{\otimes 2} - \Id_n) y_2 \rangle = 0$. By definition of $\widetilde{\Delta}$, we have for all $y_1,y_2 \in \mathbbm{1}_n^{\perp}$ with $\Vert y_1 \Vert = \Vert y_2 \Vert = 1$ that $\langle y_1, \widetilde{\limiting{W}}(w) y_1 \rangle - \langle y_2, \widetilde{\limiting{W}}(w) y_2 \rangle - ((\overline{W} + \widetilde{\Delta}) - \overline{W}) = \sum_{i_1 = 1}^n \sum_{\substack{i_2=1 \\ i_2 \neq i_1}}^n {y_1}_{i_1} {y_1}_{i_2} \max\{ \limiting{W}_{i_1,i_2} - w, 0 \} - \sum_{i_1 = 1}^n \sum_{\substack{i_2=1 \\ i_2 \neq i_1}}^n {y_2}_{i_1} {y_2}_{i_2} \max\{ \limiting{W}_{i_1,i_2} - w, 0 \} - (\max\{ \overline{W} + \widetilde{\Delta} - w, 0 \} - \max\{ \overline{W} - w, 0 \}) \leq 0$ if either $w = \limiting{W}_{i_1,i_2}$ for some $i_1 \neq i_2 \in [1:n]$ or $w = \overline{W} + \widetilde{\Delta}$. Via \citet[Theorem~9(a\textsubscript{2})]{Horvath2023}, we then get the bound $\langle y_1, \limiting{W}_k y_1 \rangle - \langle y_2, \limiting{W}_k y_2 \rangle \leq \omega^{\circ (k-1)}(\overline{W} + \widetilde{\Delta}) - \omega^{\circ (k-1)}(\overline{W})$. As this works for all $y_1,y_2 \in \mathbbm{1}_n^{\perp}$ with $\Vert y_1 \Vert = \Vert y_2 \Vert = 1$, we have $\hat{\lambda}_{\max}\left( \limiting{W}_k \right) - \hat{\lambda}_{\min}\left( \limiting{W}_k \right) \leq \omega^{\circ (k-1)}(\overline{W} + \widetilde{\Delta}) - \omega^{\circ (k-1)}(\overline{W}) \leq \widetilde{\Delta}$, using that $\omega^{\circ (k-1)}$ is $1$-Lipschitz by Proposition~\ref{prop:inverse_cosine_distance_map}. Therefore $\hat{\lambda}_{\max}(\limiting{W}_k) \leq \hat{\lambda}_{\min}(\limiting{W}_k) + \widetilde{\Delta} \leq -\omega^{\circ (k-1)}(\overline{W}) + \widetilde{\Delta}$. By the min-max theorem, we then have the bound $\lambda_2(\limiting{W}_k) \leq -\omega^{\circ (k-1)}(\overline{W}) + \widetilde{\Delta}$.

Hence for any $y \in \mathbbm{1}_n^{\perp}$ the bounds $\hat{\lambda}_{\min}(\limiting{W}_k) \Vert y \Vert^2 \leq \langle y, \limiting{W}_k y \rangle \leq \hat{\lambda}_{\max}(\limiting{W}_k) \Vert y \Vert^2$ hold, so that for $k$ large enough we have $\Vert \limiting{W}_k y \Vert \leq -\hat{\lambda}_{\min}(\limiting{W}_k) \Vert y \Vert \leq \omega^{\circ (k-1)}(\widetilde{W}) \Vert y \Vert$ for any $y \in \mathbbm{1}_n^{\perp}$.

Denoting $\hat{y} = (\Id_n - \mathbbm{1}_n^{\otimes 2}) y = y - \langle y, \frac{1}{\sqrt{n}} \mathbbm{1}_n \rangle \frac{1}{\sqrt{n}} \mathbbm{1}_n \in \mathbbm{1}_n^{\perp}$ for $y \in \R^n$, we have $\Vert \hat{y} \Vert = \sqrt{\Vert y \Vert^2 - \langle y, \frac{1}{\sqrt{n}} \mathbbm{1}_n \rangle^2}$ and $y = \hat{y} + \langle y, \frac{1}{\sqrt{n}} \mathbbm{1}_n \rangle \frac{1}{\sqrt{n}} \mathbbm{1}_n$. We can then write
\[
\left\langle y, \limiting{W}_k y \right\rangle 
= \left\langle y, \frac{1}{\sqrt{n}} \mathbbm{1}_n \right\rangle^2 \left\langle \frac{1}{\sqrt{n}} \mathbbm{1}_n, \limiting{W}_k \frac{1}{\sqrt{n}} \mathbbm{1}_n \right\rangle + 2 \left\langle y, \frac{1}{\sqrt{n}} \mathbbm{1}_n \right\rangle \left\langle \frac{1}{\sqrt{n}} \mathbbm{1}_n, \limiting{W}_k \hat{y} \right\rangle + \left\langle \hat{y}, \limiting{W}_k \hat{y} \right\rangle.
\]
By Jensen's inequality, we have $\langle \frac{1}{\sqrt{n}} \mathbbm{1}_n, \limiting{W}_k \frac{1}{\sqrt{n}} \mathbbm{1}_n \rangle \geq (n-1) \omega^{\circ (k-1)}(\widehat{W})$ where we denoted $\widehat{W} = \frac{1}{n(n-1)} \langle \mathbbm{1}_n, \limiting{W} \mathbbm{1}_n \rangle$. Note that the bounds
\[
\left\vert \left\langle \frac{1}{\sqrt{n}} \mathbbm{1}_n, \limiting{W}_k \hat{y} \right\rangle \right\vert 
\leq \left\Vert \limiting{W}_k \hat{y} \right\Vert 
\leq -\hat{\lambda}_{\min}\left( \limiting{W}_k \right) \sqrt{\Vert y \Vert^2 - \left\langle y, \frac{1}{\sqrt{n}} \mathbbm{1}_n \right\rangle^2}.
\]
and $\langle \hat{y}, \limiting{W}_k \hat{y} \rangle \geq \hat{\lambda}_{\min}(\limiting{W}_k) \Vert \hat{y} \Vert^2 = \hat{\lambda}_{\min}(\limiting{W}_k) (\Vert y \Vert^2 - \langle y, \frac{1}{\sqrt{n}} \mathbbm{1}_n \rangle^2)$ both hold as well. Letting $\rho = \vert \langle  y, \frac{1}{\sqrt{n}} \mathbbm{1}_n \rangle \vert \in [0,1]$ for $y \in \R^n$ such that $\Vert y \Vert = 1$, we therefore have that
\[
\langle y, \limiting{W}_k y \rangle
\geq \rho^2 (n-1) \omega^{\circ (k-1)}\left( \widehat{W} \right)
- 2 \rho \sqrt{1 - \rho^2} \omega^{\circ (k-1)}\left( \widetilde{W} \right)
- (1 - \rho^2) \omega^{\circ (k-1)}\left( \widetilde{W} \right).
\]
The minimum of this with respect to $\rho$ equals
\begin{multline*}
-\frac{1}{2} \left( \sqrt{\left( (n-1) \omega^{\circ (k-1)}\left( \widehat{W} \right) + \omega^{\circ (k-1)}\left( \widetilde{W} \right) \right)^2 + \left( 2 \omega^{\circ (k-1)}\left( \widetilde{W} \right) \right)^2}
\right. \\ \left.
- \sqrt{\left( (n-1) \omega^{\circ (k-1)}\left( \widehat{W} \right) - \omega^{\circ (k-1)}\left( \widetilde{W} \right) \right)^2} \right),
\end{multline*}
which is at least $-\frac{1}{4} ((n-1) \omega^{\circ (k-1)}(\widehat{W}) - \omega^{\circ (k-1)}(\widetilde{W}))^{-1} (((n-1) \omega^{\circ (k-1)}(\widehat{W}) + \omega^{\circ (k-1)}(\widetilde{W}))^2 + (2 \omega^{\circ (k-1)}(\widetilde{W}))^2 - ((n-1) \omega^{\circ (k-1)}(\widehat{W}) - \omega^{\circ (k-1)}(\widetilde{W}))^2)$ using that $\sqrt{s_1} - \sqrt{s_2} \leq \frac{1}{2} s_2^{-\frac{1}{2}} (s_1 - s_2)$ for any $s_1 > s_2 > 0$ via the fundamental theorem of calculus. This equals
\[
-\omega^{\circ (k-1)}\left( \widetilde{W} \right) \left( 1 + 2 (n-1)^{-1}  \left( \frac{\omega^{\circ (k-1)}\left( \widehat{W} \right)}{\omega^{\circ (k-1)}\left( \widetilde{W} \right)} - (n-1)^{-1} \right)^{-1} \right).
\]
By Proposition~\ref{prop:inverse_cosine_distance_propagation}, we have $(\frac{\omega^{\circ (k-1)}(\widehat{W})}{\omega^{\circ (k-1)}(\widetilde{W})} - (n-1)^{-1})^{-1} \leq O(1)$, so that 
\[
\left\langle y, \limiting{W}_k y \right\rangle 
\geq -\omega^{\circ (k-1)}\left( \widetilde{W} \right) (1 + O(n^{-1}))
\geq -\omega^{\circ (k-1)}\left( \widetilde{W} \right) - O(\Delta_\phi n^{-1} k)
\]
for all $y \in \R^n$ such that $\Vert y \Vert = 1$, implying that $\lambda_n(\limiting{W}_k) \geq -\omega^{\circ (k-1)}(\widetilde{W}) - O(\Delta_\phi n^{-1} k)$.

By the Perron-Frobenius theorem, it holds that
\[
(n-1) \omega^{\circ (k-1)}\left( \underline{W} \right)
\leq \lambda_1\left( \limiting{W}_k \right)
\leq (n-1) \omega^{\circ (k-1)}\left( \overline{W} \right),
\]
so that as $\omega^{\circ (k-1)}$ is continuous and strictly increasing on $[\underline{W},\overline{W}]$, there exists $W_k \in [\underline{W},\overline{W}]$ such that $\lambda_1(\limiting{W}_k) = (n-1) \omega^{\circ (k-1)}(W_k)$. Noting that we have $\lambda_2(\limiting{W}_k) \leq -\omega^{\circ (k-1)}(\overline{W}) + \widetilde{\Delta} \leq -\omega^{\circ (k-1)}(W_k) + \widetilde{\Delta} + \vert \overline{W} - W_k \vert \leq -\omega^{\circ (k-1)}(W_k) + O(1)$ and $\lambda_n(\limiting{W}_k) \geq -\omega^{\circ (k-1)}(\widetilde{W}) - O(\Delta_\phi n^{-1} k) \geq -\omega^{\circ (k-1)}(W_k) - O(\Delta_\phi n^{-1} k) - \vert \widetilde{W} - W_k \vert \geq -\omega^{\circ (k-1)}(W_k) - O(\Delta_\phi n^{-1} k) - O(1)$ as $\omega^{\circ (k-1)}$ is $1$-Lipschitz by Proposition~\ref{prop:inverse_cosine_distance_map}, we then have the bound
\[
\left\Vert \limiting{W}_k - \omega^{\circ (k-1)}(W_k) (\mathbbm{1}_n^{\otimes 2} - \Id_n) \right\Vert \leq O(\Delta_\phi n^{-1} k) + O(1)
\]
since $\lambda_1(\mathbbm{1}_n^{\otimes 2} - \Id_n) = n - 1$ and $\lambda_i(\mathbbm{1}_n^{\otimes 2} - \Id_n) = -1$ for $i \in [2:n]$.
\end{proof}

The asymptotic spectral behaviour of $\limiting{K}$ in the large depth limit was analyzed by \citet{Xiaoetal2020}. The following proposition quantifies the limiting behaviour of $\limiting{K}$ as the depth $l$ grows, giving a quantitative generalization of the results in \citet[Appendix~C.1]{Xiaoetal2020}.

\begin{theorem}[The spectrum of $\limiting{K}$ at the EOC]\label{thm:ntk_spectrum}~\\
Given a dataset $x_1,\cdots,x_n \in \R^{m_0}$ with no parallel data points, setting \eqref{def:eoc_parameterization} and denoting $\limiting{\tau} = [\Vert x_i \Vert : i \in [1:n]]$, $\underline{\tau}=\min_{i\in[1:n]}\{\limiting{\tau}_i\}$ and $\overline{\tau}=\max_{i\in[1:n]}\{\limiting{\tau}_i\}$, there exists $W \in (1,\infty)$ such that $W = \Theta_{\limiting{W}_1}(1)$ and denoting
\[
\xi = \frac{3}{8} \left( \Delta_\phi^{-1} \frac{\pi}{2} W + \log\left( \Delta_\phi^{-1} \frac{3\pi}{4} W + l - 1 \right) - 1 \right)
\]
we have the spectral bounds
\[
\lambda_1\left( \limiting{K} \right) 
\leq \overline{\tau}^2 \left( \left( 1 + \frac{3}{n} \right) \frac{1}{4} l + \left( 1 - \frac{1}{n} \right) \xi \right) + O\left( \overline{\tau}^2 \left( \Delta_\phi^{-1} + \Delta_\phi^{-2} l^{-1} + n^{-2} l \right) \right),
\]
\[
\lambda_{m_l}\left( \limiting{K} \right)
\geq \underline{\tau}^2 \left( \left( 1 + \frac{3}{n} \right) \frac{1}{4} l + \left( 1 - \frac{1}{n} \right) \xi \right) - O\left( \overline{\tau}^2 \left( \Delta_\phi^{-1} + \Delta_\phi^{-2} l^{-1} + n^{-2} l \right) \right),
\]
\[
\lambda_{m_l+1}\left( \limiting{K} \right)
\leq \overline{\tau}^2 \frac{1}{n} \left( \frac{3}{4} l - \xi \right) + O\left( \overline{\tau}^2 \left( \Delta_\phi^{-1} n^{-1} + \Delta_\phi^{-2} l^{-1} + n^{-2} l \right) \right)
\]
and
\[
\lambda_{n m_l}\left( \limiting{K} \right)
\geq \underline{\tau}^2 \frac{1}{n} \left( \frac{3}{4} l - \xi \right) - O\left( \overline{\tau}^2 \left( \Delta_\phi^{-1} n^{-1} + \Delta_\phi^{-2} l^{-1} + n^{-2} l \right) \right),
\]
while the condition number satisfies
\begin{multline*}
\frac{\underline{\tau}^2}{\overline{\tau}^2} \left( 1 + \frac{1}{3} n + \frac{16}{9} n \frac{\xi}{l - \frac{4}{3} \xi} \right) - O\left( \frac{\underline{\tau}^2}{\overline{\tau}^2} \left( 1 + \Delta_\phi^{-1} n l^{-1} + \Delta_\phi^{-2} n^2 l^{-2} \right) \right) \\
\leq \kappa\left( \limiting{K} \right) = \frac{\lambda_1\left( \limiting{K} \right)}{\lambda_{n m_l}\left( \limiting{K} \right)} \\
\leq \frac{\overline{\tau}^2}{\underline{\tau}^2} \left( 1 + \frac{1}{3} n + \frac{16}{9} n \frac{\xi}{l - \frac{4}{3} \xi} \right) + O\left( \frac{\overline{\tau}^2}{\underline{\tau}^2} \left( 1 + \Delta_\phi^{-1} n l^{-1} + \Delta_\phi^{-2} n^2 l^{-2} \right) \right).
\end{multline*}
\end{theorem}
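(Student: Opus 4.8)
The plan is to exploit the block structure of $\limiting{K}$ to reduce the spectral problem to an $n \times n$ scalar matrix, approximate that matrix by a rank-one perturbation of a multiple of the identity using the two earlier approximation results, and transfer the resulting eigenvalue estimates through the diagonal scaling by the norms $\limiting{\tau}$. First I would use Proposition~\ref{prop:limiting_ntk_at_eoc}: since each block $\frac{1}{n}\limiting{K}(x_{i_1},x_{i_2})$ is a scalar multiple of $\Id_{m_l}$, we may write $\limiting{K} = M \boxtimes \Id_{m_l}$ with $M_{i_1,i_2} = \frac{1}{n} \Vert x_{i_1}\Vert \Vert x_{i_2}\Vert u_l(\limiting{\rho}_1(x_{i_1},x_{i_2}))$, where $u_l$ is as in Proposition~\ref{prop:inverse_cosine_distances_approximate_ntk}. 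The spectrum of $M \boxtimes \Id_{m_l}$ is that of $M$ with every eigenvalue repeated $m_l$ times, so $\lambda_1(\limiting{K}) = \lambda_{m_l}(\limiting{K}) = \lambda_1(M)$, $\lambda_{m_l+1}(\limiting{K}) = \lambda_2(M)$ and $\lambda_{nm_l}(\limiting{K}) = \lambda_n(M)$, which is exactly why four separate eigenvalue bounds appear. Writing $M = \frac{1}{n} D_{\limiting{\tau}} A D_{\limiting{\tau}}$ with $A_{i_1,i_2} = u_l(\limiting{\rho}_1(x_{i_1},x_{i_2}))$, Ostrowski's theorem (the congruence version of the Courant--Fischer bound) yields $\lambda_k(M) = \frac{1}{n}\theta_k \lambda_k(A)$ with $\theta_k \in [\underline{\tau}^2,\overline{\tau}^2]$; since $\limiting{K}$, hence $M$, is positive semidefinite, all $\lambda_k(A)$ are nonnegative and it suffices to bound $\frac{1}{n}\lambda_1(A)$, $\frac{1}{n}\lambda_2(A)$ and $\frac{1}{n}\lambda_n(A)$ and then insert the factors $\underline{\tau}^2,\overline{\tau}^2$.

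Next I would approximate $A$. Its diagonal is $u_l(1) = l$ because $\varrho(1) = \varrho'(1) = 1$. For the off-diagonal entries, Proposition~\ref{prop:inverse_cosine_distances_approximate_ntk} together with the identity ${\limiting{W}_l}_{i_1,i_2} = \omega^{\circ (l-1)}\big((\frac{1-\limiting{\rho}_1}{2})^{-\frac{1}{2}}\big)$ gives $\vert A_{i_1,i_2} - (\Delta_\phi^{-1}\frac{3\pi}{16}{\limiting{W}_l}_{i_1,i_2} - \frac{1}{8})\vert \leq O(\Delta_\phi^{-2} l^{-1})$. Combining this entrywise bound (whose operator norm I control via the infinity norm and the Gershgorin estimate $\Vert \cdot \Vert \leq \Vert \cdot \Vert_\infty$) with the matrix approximation $\limiting{W}_l = \omega^{\circ (l-1)}(W)(\mathbbm{1}_n^{\otimes 2} - \Id_n) + E_2$, $\Vert E_2\Vert \leq O(\Delta_\phi n^{-1} l) + O(1)$, from Proposition~\ref{prop:inverse_cosine_distance_matrices} (setting $W = W_l$), I obtain $A = \alpha(\mathbbm{1}_n^{\otimes 2} - \Id_n) + l \Id_n + E$, where $\alpha = \Delta_\phi^{-1}\frac{3\pi}{16}\omega^{\circ (l-1)}(W) - \frac{1}{8}$ and $\Vert E\Vert \leq O(\Delta_\phi^{-2} n l^{-1}) + O(n^{-1} l) + O(\Delta_\phi^{-1})$ after scaling $E_2$ by $\Delta_\phi^{-1}\frac{3\pi}{16}$.

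I would then read off the spectrum of the model matrix $A_0 = \alpha \mathbbm{1}_n^{\otimes 2} + (l-\alpha)\Id_n$: since $\alpha > 0$, it has $\lambda_1(A_0) = (n-1)\alpha + l$ (eigenvector $\mathbbm{1}_n$) and $\lambda_2(A_0) = \cdots = \lambda_n(A_0) = l - \alpha$, and Weyl's inequality gives $\vert \lambda_k(A) - \lambda_k(A_0)\vert \leq \Vert E\Vert$. Applying the expansion of Proposition~\ref{prop:inverse_cosine_distance_propagation} to $\omega^{\circ (l-1)}(W)$ and using $\frac{3\pi}{16} = \frac{3}{8}\frac{\pi}{2}$, I get $\alpha = \frac{1}{4}l + \xi + O(\Delta_\phi^{-1})$, which is precisely why $\xi$ is defined as it is. Substituting yields $\frac{1}{n}\lambda_1(A) = (1-\frac{1}{n})\alpha + \frac{l}{n} = (1+\frac{3}{n})\frac{1}{4}l + (1-\frac{1}{n})\xi + O(\Delta_\phi^{-1})$ and $\frac{1}{n}\lambda_2(A) = \frac{1}{n}\lambda_n(A) = \frac{1}{n}(\frac{3}{4}l - \xi) + O(\Delta_\phi^{-1}n^{-1})$, both up to $\frac{1}{n}\Vert E\Vert = O(\Delta_\phi^{-2}l^{-1}) + O(n^{-2}l) + O(\Delta_\phi^{-1}n^{-1})$; inserting $\underline{\tau}^2 \leq \theta_k \leq \overline{\tau}^2$ reproduces the four stated eigenvalue bounds, with $\lambda_n(A) \approx \frac{3}{4}l - \xi > 0$ for large $l$ guaranteeing that $\kappa$ is well defined.

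Finally, the condition number bounds follow by dividing the $\overline{\tau}^2$ upper bound for $\lambda_1(\limiting{K})$ by the $\underline{\tau}^2$ lower bound for $\lambda_{nm_l}(\limiting{K})$ (and conversely for the lower bound on $\kappa$); the leading ratio $\frac{(n+3)\frac{1}{4}l + (n-1)\xi}{\frac{3}{4}l - \xi}$ simplifies, using $\frac{3}{4}l - \xi = \frac{3}{4}(l - \frac{4}{3}\xi)$, to $1 + \frac{1}{3}n + \frac{16}{9}n\frac{\xi}{l - \frac{4}{3}\xi}$, and the relative errors propagate through the quotient with an extra factor $\kappa = \Theta(n)$, yielding the error $O(\frac{\overline{\tau}^2}{\underline{\tau}^2}(1 + \Delta_\phi^{-1}n l^{-1} + \Delta_\phi^{-2}n^2 l^{-2}))$. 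I expect the main obstacle to be the careful bookkeeping of the three independent error sources---the NTK-to-inverse-cosine-distance approximation (Proposition~\ref{prop:inverse_cosine_distances_approximate_ntk}), the inverse-cosine-distance-matrix approximation (Proposition~\ref{prop:inverse_cosine_distance_matrices}) and the propagation expansion (Proposition~\ref{prop:inverse_cosine_distance_propagation})---each scaled differently (by $\frac{1}{n}$, by $\Delta_\phi^{-1}\frac{3\pi}{16}$, and by $(1-\frac{1}{n})$ versus $\frac{1}{n}$ according to which eigenvalue is targeted), so one must verify that the resulting rates collapse to exactly $\Delta_\phi^{-1}$, $\Delta_\phi^{-2}l^{-1}$ and $n^{-2}l$ and that all constant shifts are absorbed; the division producing $\kappa$ is the most delicate point.
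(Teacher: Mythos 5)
Your proposal is correct and follows essentially the same route as the paper: the same factorization $\limiting{K} = \bigl( \frac{1}{n} D_{\limiting{\tau}} U D_{\limiting{\tau}} \bigr) \boxtimes \Id_{m_l}$, the same reduction to the model matrix $\alpha(\mathbbm{1}_n^{\otimes 2} - \Id_n) + l \Id_n$ via Propositions~\ref{prop:inverse_cosine_distances_approximate_ntk} and~\ref{prop:inverse_cosine_distance_matrices}, the same identification $\alpha = \frac{1}{4}l + \xi \pm O(\Delta_\phi^{-1})$ via Proposition~\ref{prop:inverse_cosine_distance_propagation}, and the same three error sources collapsing to the stated rates. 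The only (cosmetic) difference is that you transfer the spectrum through the congruence by $D_{\limiting{\tau}}$ using Ostrowski's theorem on the unweighted matrix plus Weyl, whereas the paper keeps the $\tau$-weights inside the model matrix $\frac{l}{n}((1-c)D_{\limiting{\tau}}^2 + c\,\limiting{\tau}^{\otimes 2})$ and invokes a rank-one-perturbation interlacing result together with Perron--Frobenius; both yield identical leading terms and error bounds.
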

\begin{proof}
Denoting $\widetilde{U} = \frac{1}{2} \Delta_\phi^{-1} b_3^{-1} \limiting{W}_l - \frac{1}{8} (\mathbbm{1}_n^{\otimes 2} - \Id_n)$, we have 
\begin{multline*}
\left\Vert \widetilde{U} - \left( \frac{1}{2} \Delta_\phi^{-1} b_3^{-1} \omega^{\circ (l-1)}\left( W \right) - \frac{1}{8} \right) \left( \mathbbm{1}_n^{\otimes 2} - \Id_n \right) \right\Vert \\
\leq \frac{1}{2} \Delta_\phi^{-1} b_3^{-1} \left\Vert \limiting{W}_l - \omega^{\circ (l-1)}\left( W \right) \left( \mathbbm{1}_n^{\otimes 2} - \Id_n \right) \right\Vert \leq O(n^{-1} l) + O(\Delta_\phi^{-1})
\end{multline*}
by Proposition~\ref{prop:inverse_cosine_distance_matrices}. Let 
\[
U = \left[ \sum_{k=1}^l \varrho^{\circ (k-1)}\left( \limiting{\rho}_1(x_{i_1},x_{i_2}) \right) \prod_{k'=k}^{l-1} \varrho'\left( \varrho^{\circ (k'-1)}\left( \limiting{\rho}_1(x_{i_1},x_{i_2}) \right) \right) : i_1,i_2 \in [1:n] \right],
\]
so that $U_{i,i} = l$ for $i \in [1:n]$ and $\limiting{K} = \left( \frac{1}{n} D_{\limiting{\tau}} U D_{\limiting{\tau}} \right) \boxtimes \Id_{m_l}$ by Proposition~\ref{prop:limiting_ntk_at_eoc}. By Proposition~\ref{prop:inverse_cosine_distances_approximate_ntk} we get $\Vert U - (\widetilde{U} + l \Id_n) \Vert \leq \Vert U - (\widetilde{U} + l \Id_n) \Vert_\infty \leq O(\Delta_\phi^{-2} n l^{-1})$, so that by the triangle inequality and submultiplicativity we have the bound
\[
\left\Vert \frac{1}{n} D_{\limiting{\tau}} U D_{\limiting{\tau}} - \frac{1}{n} D_{\limiting{\tau}} \left( \left( \frac{1}{2} \Delta_\phi^{-1} b_3^{-1} \omega^{\circ (l-1)}\left( W \right) - \frac{1}{8} \right) \left( \mathbbm{1}_n^{\otimes 2} - \Id_n \right) + l \Id_n \right) D_{\limiting{\tau}} \right\Vert 
\leq \epsilon
\]
with $\epsilon \leq \overline{\tau}^2 (O(\Delta_\phi^{-2} l^{-1}) + O(n^{-2} l) + O(\Delta_\phi^{-1} n^{-1}))$. Letting $c = l^{-1} ( \frac{1}{2} \Delta_\phi^{-1} b_3^{-1} \omega^{\circ (l-1)}( W ) - \frac{1}{8} )$, the matrix on the right can be written as $\frac{1}{n} D_{\limiting{\tau}} ( c l \mathbbm{1}_n^{\otimes 2} + (1 - c) l \Id_n ) D_{\limiting{\tau}} = \frac{l}{n} ((1-c) D_{\limiting{\tau}}^2 + c \limiting{\tau}^{\otimes 2})$, showing that it is a rank one perturbation of a diagonal matrix. By \citet[\S~1]{Jakovcevicetal2015}, we get the spectral bounds $\lambda_2(\frac{l}{n} ((1-c) D_{\limiting{\tau}}^2 + c \limiting{\tau}^{\otimes 2})) \leq \frac{l}{n} (1-c) \overline{\tau}^2$ and $\lambda_n(\frac{l}{n} ((1-c) D_{\limiting{\tau}}^2 + c \limiting{\tau}^{\otimes 2})) \geq \frac{l}{n} (1-c) \underline{\tau}^2$, while the Perron-Frobenius theorem gives that $\frac{l}{n} (1 + (n-1) c) \underline{\tau}^2 \leq \lambda_1(\frac{l}{n} ((1-c) D_{\limiting{\tau}}^2 + c \limiting{\tau}^{\otimes 2})) \leq \frac{l}{n} (1 + (n-1) c) \overline{\tau}^2$. As Proposition~\ref{prop:inverse_cosine_distance_propagation} implies
\begin{multline*}
c = l^{-1} \left( \frac{1}{2} \Delta_\phi^{-1} b_3^{-1} \left( W + \frac{1}{2} \Delta_\phi b_3 (l-1) + \frac{3}{4} \Delta_\phi b_3 \log\left( 2 \Delta_\phi^{-1} b_3^{-1} W + l - 1 \right) \pm O(1) \right) - \frac{1}{8} \right) \\
= \frac{1}{4} + l^{-1} \xi \pm O(\Delta_\phi^{-1} l^{-1}),
\end{multline*}
we have
\[
\lambda_2\left( \frac{l}{n} \left( (1-c) D_{\limiting{\tau}}^2 + c \limiting{\tau}^{\otimes 2} \right) \right)
\leq \overline{\tau}^2 \left( \frac{3}{4} \frac{l}{n} - n^{-1} \xi + O(\Delta_\phi^{-1} n^{-1}) \right),
\]
\[
\lambda_n\left( \frac{l}{n} \left( (1-c) D_{\limiting{\tau}}^2 + c \limiting{\tau}^{\otimes 2} \right) \right)
\geq \underline{\tau}^2 \left( \frac{3}{4} \frac{l}{n} - n^{-1} \xi - O(\Delta_\phi^{-1} n^{-1}) \right),
\]
\[
\lambda_1\left( \frac{l}{n} \left( (1-c) D_{\limiting{\tau}}^2 + c \limiting{\tau}^{\otimes 2} \right) \right)
\leq \overline{\tau}^2 \left( \left( 1 + \frac{3}{n} \right) \frac{1}{4} l + \left( 1 - \frac{1}{n} \right) \xi + O(\Delta_\phi^{-1}) \right) 
\]
and
\[
\lambda_1\left( \frac{l}{n} \left( (1-c) D_{\limiting{\tau}}^2 + c \limiting{\tau}^{\otimes 2} \right) \right)
\geq \underline{\tau}^2 \left( \left( 1 + \frac{3}{n} \right) \frac{1}{4} l + \left( 1 - \frac{1}{n} \right) \xi - O(\Delta_\phi^{-1}) \right).
\]
By Weyl's inequality, these bounds hold for $\frac{1}{n} D_{\limiting{\tau}} U D_{\limiting{\tau}}$ with an additional $\epsilon$ error. We get the eigenvalue bounds using the fact that the eigenvalues of $\limiting{K} = \left( \frac{1}{n} D_{\limiting{\tau}} U D_{\limiting{\tau}} \right) \boxtimes \Id_{m_l}$ are exactly the eigenvalues of $\frac{1}{n} D_{\limiting{\tau}} U D_{\limiting{\tau}}$, each repeated $m_l$ times.

Note that we then have via the fundamental theorem of calculus that
\begin{multline*}
\underline{\tau}^2 \lambda_n\left( \frac{1}{n} D_{\limiting{\tau}} U D_{\limiting{\tau}} \right)^{-1} - \left( \frac{3}{4} \frac{l}{n} - n^{-1} \xi \right)^{-1} \\
\leq \left( \frac{3}{4} \frac{l}{n} - n^{-1} \xi - O(\Delta_\phi^{-1} n^{-1}) - \epsilon \right)^{-2} (O(\Delta_\phi^{-1} n^{-1}) + \epsilon) 
\leq O(n^2 l^{-2}) (O(\Delta_\phi^{-1} n^{-1}) + \epsilon) 
\end{multline*}
and
\begin{multline*}
\overline{\tau}^2 \lambda_2\left( \frac{1}{n} D_{\limiting{\tau}} U D_{\limiting{\tau}} \right)^{-1} - \left( \frac{3}{4} \frac{l}{n} - n^{-1} \xi \right)^{-1} \\
\geq \left( \frac{3}{4} \frac{l}{n} - n^{-1} \xi + O(\Delta_\phi^{-1} n^{-1}) + \epsilon \right)^{-2} (O(\Delta_\phi^{-1} n^{-1}) + \epsilon)
\leq O(n^2 l^{-2}) (O(\Delta_\phi^{-1} n^{-1}) + \epsilon).
\end{multline*}
Therefore $\kappa\left( \frac{1}{n} D_{\limiting{\tau}} U D_{\limiting{\tau}} \right)$ is at most
\begin{multline*}
\frac{\overline{\tau}^2 \left( \left( 1 + \frac{3}{n} \right) \frac{1}{4} l + \left( 1 - \frac{1}{n} \right) \xi + O(\Delta_\phi^{-1}) \right) + \epsilon}{\underline{\tau}^2 (\frac{3}{4} \frac{l}{n} - n^{-1} \xi)} + O(\overline{\tau}^2 l) O(\underline{\tau}^{-2} n^2 l^{-2}) (O(\Delta_\phi^{-1} n^{-1}) + \epsilon) \\
\leq \frac{\overline{\tau}^2}{\underline{\tau}^2} \left( 1 + \frac{1}{3} n + \frac{16}{9} n \frac{\xi}{l - \frac{4}{3} \xi} \right) + O\left( \frac{\overline{\tau}^2}{\underline{\tau}^2} \left( 1 + \Delta_\phi^{-1} n l^{-1} + \Delta_\phi^{-2} n^2 l^{-2} \right) \right)
\end{multline*}
and at least
\begin{multline*}
\frac{\overline{\tau}^2 \left( \left( 1 + \frac{3}{n} \right) \frac{1}{4} l + \left( 1 - \frac{1}{n} \right) \xi - O(\Delta_\phi^{-1}) \right) - \epsilon}{\frac{l}{n} (1-c) \overline{\tau}^2} - O(\overline{\tau}^2 l) O(\underline{\tau}^{-2} n^2 l^{-2}) (O(\Delta_\phi^{-1} n^{-1}) + \epsilon) \\
\geq \frac{\underline{\tau}^2}{\overline{\tau}^2} \left( 1 + \frac{1}{3} n + \frac{16}{9} n \frac{\xi}{l - \frac{4}{3} \xi} \right) - O\left( \frac{\underline{\tau}^2}{\overline{\tau}^2} \left( 1 + \Delta_\phi^{-1} n l^{-1} + \Delta_\phi^{-2} n^2 l^{-2} \right) \right),
\end{multline*}
giving the condition number bounds and finishing the proof.
\end{proof}

\begin{remark}[Nondegeneracy]~\\
In order for the above results to be applicable, pairs of distinct data points cannot be parallel. In case there are parallel data points but no repeated ones, we can replace $x_i$ for all $i \in [1:n]$ with $[x_i,\beta]$ for some $\beta > 0$ to get a dataset without parallel pairs of data points. In particular, this is strictly necessary when dealing with $1$-dimensional data. This is equivalent to having a bias in the first layer.
\end{remark}

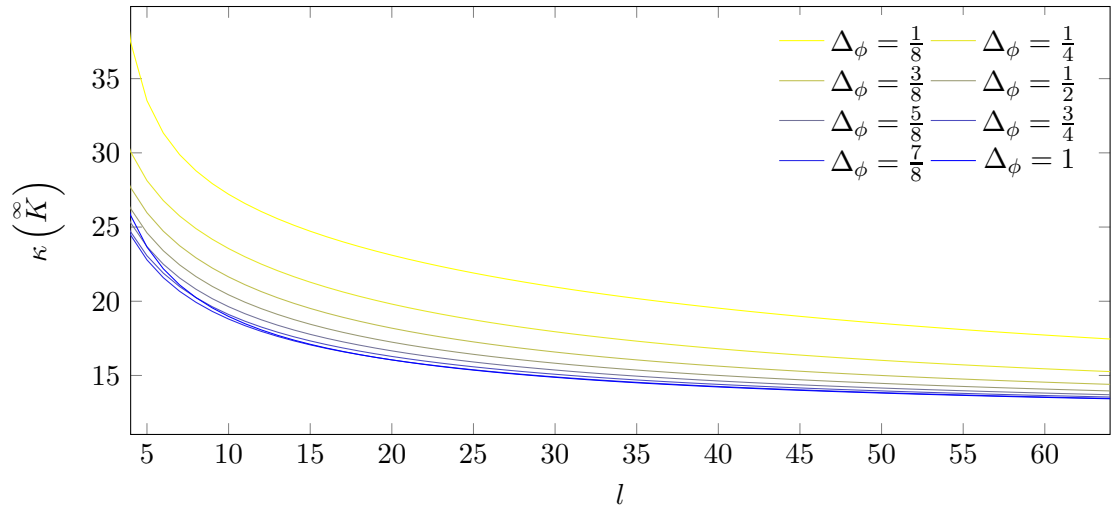
\begin{figure}[t]
%\vskip 0.2in
\begin{center}
\centerline{
\begin{tikzpicture}
\pgfplotsset{cycle list/Dark2}
\begin{axis}[
    xlabel = $l$,
    ylabel = $\kappa\left( \limiting{K} \right)$,
    legend columns=2,
    legend style={draw=none},
    xmin=4,
    xmax=64,
	x post scale=1.9
]
\addlegendentry{$\Delta_\phi = \frac{1}{8}$}
\addplot[mycolor1] table [x=Step, y=Value, col sep=comma] {csvs/ntk/run-ntk_init_p_1_q_0_r_1_gaussian_d_32_m_1_J_64_a_2.6457513110645907_b_1.0_relu_samples_100_limiting_samples_1048576_2024-04-27_12_29_45_404571-tag-limiting_ntk_condition_number_mean.csv};
\addlegendentry{$\Delta_\phi = \frac{1}{4}$}
\addplot[mycolor2] table [x=Step, y=Value, col sep=comma] {csvs/ntk/run-ntk_init_p_1_q_0_r_1_gaussian_d_32_m_1_J_64_a_1.7320508075688772_b_1.0_relu_samples_100_limiting_samples_1048576_2024-04-27_12_29_24_400446-tag-limiting_ntk_condition_number_mean.csv};
\addlegendentry{$\Delta_\phi = \frac{3}{8}$}
\addplot[mycolor3] table [x=Step, y=Value, col sep=comma] {csvs/ntk/run-ntk_init_p_1_q_0_r_1_gaussian_d_32_m_1_J_64_a_2.23606797749979_b_1.7320508075688772_relu_samples_100_limiting_samples_1048576_2024-04-27_12_29_14_240406-tag-limiting_ntk_condition_number_mean.csv};
\addlegendentry{$\Delta_\phi = \frac{1}{2}$}
\addplot[mycolor4] table [x=Step, y=Value, col sep=comma] {csvs/ntk/run-ntk_init_p_1_q_0_r_1_gaussian_d_32_m_1_J_64_a_1.0_b_1.0_relu_samples_100_limiting_samples_1048576_2024-04-27_12_34_00_608747-tag-limiting_ntk_condition_number_mean.csv};
\addlegendentry{$\Delta_\phi = \frac{5}{8}$}
\addplot[mycolor5] table [x=Step, y=Value, col sep=comma] {csvs/ntk/run-ntk_init_p_1_q_0_r_1_gaussian_d_32_m_1_J_64_a_1.7320508075688772_b_2.23606797749979_relu_samples_100_limiting_samples_1048576_2024-04-27_12_34_22_402539-tag-limiting_ntk_condition_number_mean.csv};
\addlegendentry{$\Delta_\phi = \frac{3}{4}$}
\addplot[mycolor6] table [x=Step, y=Value, col sep=comma] {csvs/ntk/run-ntk_init_p_1_q_0_r_1_gaussian_d_32_m_1_J_64_a_1.0_b_1.7320508075688772_relu_samples_100_limiting_samples_1048576_2024-04-27_12_34_12_088109-tag-limiting_ntk_condition_number_mean.csv};
\addlegendentry{$\Delta_\phi = \frac{7}{8}$}
\addplot[mycolor7] table [x=Step, y=Value, col sep=comma] {csvs/ntk/run-ntk_init_p_1_q_0_r_1_gaussian_d_32_m_1_J_64_a_1.0_b_2.6457513110645907_relu_samples_100_limiting_samples_1048576_2024-04-27_12_34_07_320343-tag-limiting_ntk_condition_number_mean.csv};
\addlegendentry{$\Delta_\phi = 1$}
\addplot[mycolor8] table [x=Step, y=Value, col sep=comma] {csvs/ntk/run-ntk_init_p_1_q_0_r_1_gaussian_d_32_m_1_J_64_a_0.0_b_1.0_relu_samples_100_limiting_samples_1048576_2024-04-27_12_18_02_451927-tag-limiting_ntk_condition_number_mean.csv};
\end{axis}
\end{tikzpicture}
}
\caption{\label{fig:limit-behaviour}Condition number of $\limiting{K}$ for $(a,b)$-ReLUs as a function of depth for different values of $\Delta_\phi$ with the EOC parameterization \eqref{def:eoc_parameterization}. Each curve is the average of $100$ runs, where in each run we sampled $n=32$ points $x_1,\cdots,x_{32}$ uniformly from the unit sphere in $\R^{16}$ to form the dataset and calculated the corresponding $\limiting{K}$ in closed form via Proposition~\ref{prop:cosine_map} and Proposition~\ref{prop:limiting_ntk_at_eoc}.}
\label{plot:limiting_ntk_conditioning}
\end{center}
%\vskip -0.2in
\end{figure}

\begin{remark}[Spectral scaling of $\limiting{K}$ at the EOC]\label{remark:limiting_ntk_spectral_scaling}~\\
Theorem~\ref{thm:ntk_spectrum} quantifies the behavior of the spectrum of $\limiting{K}$ in terms of depth. Note that having $\overline{\tau}$ close to $\underline{\tau}$ decreases the condition number of $\limiting{K}$. The fact that $\xi$ and the error terms scale with $\Delta_\phi^{-1}$ quantifies the way the $(a,b)$ parameters influence the conditioning of $\limiting{K}$. In particular, for $\phi_1(s) = a_1 s + b_1 \vert s \vert$ and $\phi_2(s) = a_2 s + b_2 \vert s \vert$ such that $\vert a_1 \vert \leq \vert a_2 \vert$ and $\vert b_1 \vert \geq \vert b_2 \vert$ so that $\Delta_{\phi_1} = \frac{b_1^2}{a_1^2+b_1^2} \geq \Delta_{\phi_1} = \frac{b_2^2}{a_2^2+b_2^2}$, less layers (smaller $l$) are sufficient to achieve the same conditioning with $\phi_1$ than with $\phi_2$. Informally, $\phi_1$ is ``more nonlinear'' than $\phi_2$ and having ``more nonlinearity'' by tuning $\vert a \vert$ down and $\vert b \vert$ up is beneficial for the conditioning of $\limiting{K}$. In particular, for this sole purpose the optimal choice is $a=0$ and $b \neq 0$. Figure~\ref{plot:limiting_ntk_conditioning} supports this argument empirically, while also showing that $\kappa(\limiting{K})$ converges to the limit $1 + \frac{1}{3} n$ as $l$ is increased. Last but not least, increasing the dataset size $n$ always results in worse conditioning.
\end{remark}

\section{Limitations and Future Directions} \label{limitations}
The main limitation of our results is that they cover only MLPs, a basic neural network architecture with a narrow range of practical applicability in real-world problems. We expect that studying the NTK via the approximate equivalence of its entries and the inverse cosine distances should be viable with other architectures such as convolutional neural networks and transformers. Another limitation of our work is that we only treat homogeneous activations. Extending our theory to other activations such as smooth rectifiers and sigmoid-like functions should be possible, but more difficult since without homogeneity the variances in the limiting inner products do not factor through the dual functions as in Proposition~\ref{prop:limiting_inner_products}. Additionally, the results of \citet{Xiaoetal2020} concerning smooth activations suggest that the corresponding NTKs should have worse conditioning than the ones we study here. Finally, while our result can readily be applied to study the training of MLPs in the kernel regime by exploiting the lazy training phenomenon, we believe that the most important future direction is the study of the behavior of the NTK matrix during training in the feature learning regime, where lazy training is absent and the NTK evolves in a nontrivial manner.

\section*{Acknowledgements}
D\'avid Terj\'ek and Diego Gonz\'alez-S\'anchez were supported by the Ministry of Innovation and Technology NRDI Office within the framework of the Artificial Intelligence National Laboratory (RRF-2.3.1-21-2022-00004).

\newpage

\bibliography{jmlr_submission}

%%%%%%%%%%%%%%%%%%%%%%%%%%%%%%%%%%%%%%%%%%%%%%%%%%%%%%%%%%%%%%%%%%%%%%%%%%%%%%%
%%%%%%%%%%%%%%%%%%%%%%%%%%%%%%%%%%%%%%%%%%%%%%%%%%%%%%%%%%%%%%%%%%%%%%%%%%%%%%%
% APPENDIX
%%%%%%%%%%%%%%%%%%%%%%%%%%%%%%%%%%%%%%%%%%%%%%%%%%%%%%%%%%%%%%%%%%%%%%%%%%%%%%%
%%%%%%%%%%%%%%%%%%%%%%%%%%%%%%%%%%%%%%%%%%%%%%%%%%%%%%%%%%%%%%%%%%%%%%%%%%%%%%%
%\newpage
%\appendix

\end{document}